\title{Impossibility of Partial Recovery in the Graph Alignment Problem}
\begin{document}
\maketitle

\begin{abstract}%
Random graph alignment refers to recovering the underlying vertex correspondence between two random graphs with correlated edges. This can be viewed as an average-case and noisy version of the well-known graph isomorphism problem. For the correlated \ER model, we prove an impossibility result for partial recovery in the sparse regime, with constant average degree and correlation, as well as a general bound on the maximal reachable overlap. Our bound is tight in the noiseless case (the graph isomorphism problem) and we conjecture that it is still tight with noise. Our proof technique relies on a careful application of the probabilistic method to build automorphisms between tree components of a subcritical \ER graph.
\end{abstract}

\begin{keywords}
graph alignment, probabilistic method, \ER random graphs, partial recovery
\end{keywords}

\section{Introduction}
Graph alignment, also known as graph matching, aims at finding a bijective mapping between the  vertex sets of two graphs so that the number of  adjacency disagreements between the two graphs is minimized. It reduces to the graph isomorphism problem in the noiseless setting where the two graphs can be matched perfectly. The paradigm of graph alignment has found numerous applications across a variety of diverse fields, such as network privacy (\cite{Narayanan08}), computational biology (\cite{Singh08}), computer vision (\cite{CFVS04}), and natural language processing.

Given two graphs with adjacency matrices $A$ and $B$, the graph matching problem can be viewed as a special case of the quadratic assignment problem (QAP) (\cite{Pardalos94}): 
\begin{equation}\label{eq:QAP}
    \max_\Pi \langle A, \Pi B \Pi^T\rangle
\end{equation}
where $\Pi$ ranges over all $n\times n$ permutation matrices, and $\langle \cdot, \cdot \rangle$ denotes the matrix inner product. QAP is NP-hard in general. These hardness results are applicable in the worst case, where the observed graphs are designed by an adversary.  In many applications, the graphs can be modeled by random graphs; as such, our focus is not in the worst-case instances, but rather in recovering partially the underlying vertex permutation with high probability.

\paragraph{Correlated \ER model}
Driven by applications in social networks and biology, a recent line of work (\cite{Lyzinski14, Feizi16, Cullina2017, elel18,Cullina18, Ding18, Cullina18data, Fan2019Wigner, GLM19, Wu20, Ganassali20gaussian, fan2019ERC,Ganassali20a,wu2021settling}) initiated the statistical analysis of graph matching by assuming that matrices $A$ and $B$ are generated randomly. 
The simplest such model is the following \emph{correlated \ER model}: we are given two graphs $\cG$ and $\cG'$ with the same set of nodes $[n]$ and with respectively blue and red edges. The blue and red edges are obtained by sampling uniformly at random:
\begin{itemize}
    \item with probability $\lambda s/n$ to get two-colored edges;
    \item with probability $\lambda(1-s)/n$ to get a blue (monochromatic) edge;
    \item with probability $\lambda(1-s)/n$ to get a red (monochromatic) edge;
    \item with probability $1-\lambda(2-s)/n$ to get a non-edge,
\end{itemize}
where $\lambda>0$ and $s\in [0,1]$ are fixed parameters and $n$ is large.
Hence each $\cG$ and $\cG'$ is a sparse \ER model with edge probability $\lambda/n$. For large values of $n$, the fraction of edges of $\cG$ (resp. $\cG'$) that are shared with $\cG'$ (resp. $\cG$) is. of order $s$ (see Figure \ref{fig:img_GGp}).

\begin{figure}[H]
	\centering
	\includegraphics[scale=0.8]{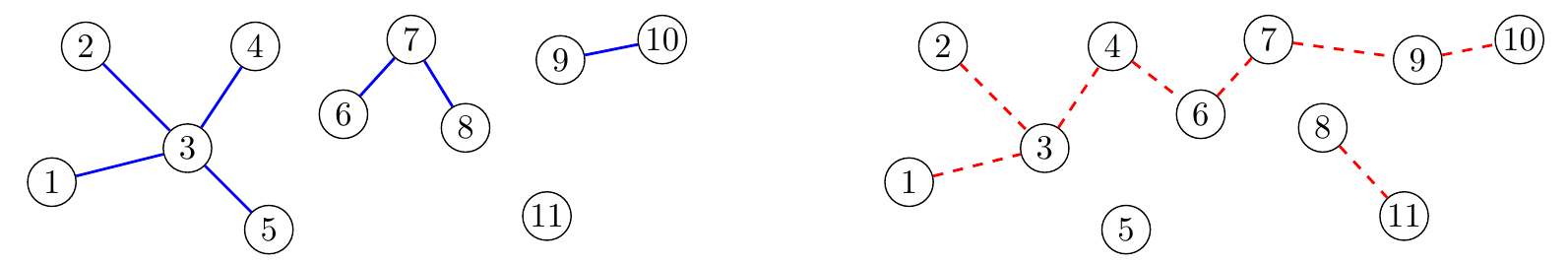}
	\caption{A realization of $(\cG,\cG')$ from the correlated \ER model, with $n=11$, $\lambda = 1.9$, and $s=0.7$. For the sake of readability, red edges are always dashed.}
	\label{fig:img_GGp}
\end{figure}

 We then relabel the vertices of the red graph $\cG'$ with a uniform independent permutation $\pi^* \in \cS_n$, and we observe $\cG$ and $\cH := \cG'^{\pi^*}$, see Figure \ref{fig:img_GH}.
Upon observing $\cG$ and $\cH$, the goal is to recover (or, reconstruct) partially the latent vertex correspondence $\pi^*$ with probability converging to $1$ as $n\to \infty$.

\begin{figure}[h]
	\centering
	\includegraphics[scale=0.8]{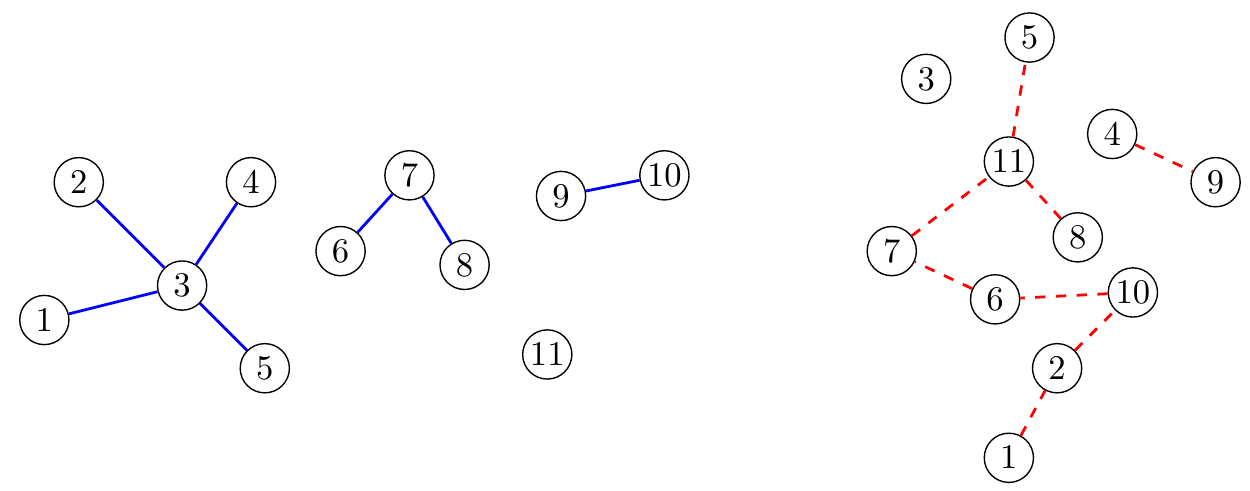}
	\caption{The pair $(\cG,\cH)$ corresponding to $(\cG,\cG')$ of Figure \ref{fig:img_GGp}, after relabeling $\cG'$ with the permutation $\pi^* = (6)(1 \; 5 \; 3 \; 11\; 9 \;2 \;8 \;4 \;7 \;10)$.}
	\label{fig:img_GH}
\end{figure}

\paragraph{Partial alignment in the sparse regime} We now define our notion of performance. First note that since we are in the sparse regime, even without any noise, i.e. with $s=1$, there is no way to be able to map the $\Theta(n)$ isolated vertices in $\cG$ and $\cH$ better than chance. Hence, we concentrate on the partial alignment problem where we ask for the best possible fraction of matched vertices between $\cG$ and $\cH$. More formally, an \emph{estimator} $\hat{\pi}$ (of $\pi^*$) is a $\cS_n$-valued measurable function of $(\cG, \cH)$. In order to match the two graphs correctly, we do not want to allow $\hat{\pi}$ to learn any information from the unique observation of $\cG$: indeed, since there is no canonical labeling of graphs, the estimator $\hat{\pi}$ must perform well even after any relabeling of the nodes of $\cG$. Hence, for any estimator $\hat{\pi}=\hat{\pi}(\cG,\cH)$ of $\pi^*$, we define its \emph{overlap} as follows
\begin{equation}\label{eq:overlap}
\ov(\hat{\pi}(\cG,\cH),\pi^*) := \frac{1}{n!} \sum_{\sigma \in \cS_n} \sum_{i=1}^{n} \mathbf{1}_{\hat{\pi}(\cG^{\sigma},\cH)(i)= \pi^* \circ  \sigma^{-1} (i)}.
\end{equation} With this definition, it is then easy to check that as wanted, for any $\sigma \in \cS_n$,
\begin{equation}\label{eq:invariance_overlap}
\ov(\hat{\pi}(\cG^{\sigma},\cH),\pi^*) = \ov(\hat{\pi}(\cG,\cH),\pi^* \circ \sigma).
\end{equation} 

\begin{remarque}
This natural definition \eqref{eq:overlap} is here to put aside trivial estimators such as $\hat{\pi}=\id$. Note that for the maximum a posteriori estimator $\hat{\pi}_{\MAP}$, which is the permutation solving the maximization problem \eqref{eq:QAP}, the first sum in \eqref{eq:overlap} can be simplified to one term. Indeed $\hat{\pi}_{\MAP}$ -- similarly to a lot of other 'natural' estimators -- verifies an \emph{equivariance} property, in the sense that for all $\sigma \in \cS_n$,
\begin{equation*}
	\hat{\pi}(\cG^{\sigma},\cH) = \hat{\pi}(\cG,\cH) \circ \sigma^{-1}.
\end{equation*}
\end{remarque}

Partial alignment thus consists in finding a estimator $\hat{\pi}$ of $\pi^*$ satisfying $\ov(\hat{\pi},\pi^*) > \alpha n$ with high probability, for some $\alpha>0$.  Let us start by stating a conjecture:
\begin{conj*}
\item[$(i)$] If $\lambda s \leq 1$, partial reconstruction is impossible, i.e. for any $\alpha >0$, for all estimator $\hat{\pi}$, $$\dP\left(\ov(\hat{\pi},\pi^*) > \alpha n  \right) \underset{n \to \infty}{\longrightarrow} 0.$$
\item[$(ii)$] If $\lambda s > 1$, partial reconstruction is possible (feasible), i.e. there exists $\alpha>0$ and an estimator $\hat{\pi}$ such that $$\dP\left(\ov(\hat{\pi},\pi^*) > \alpha n  \right) \underset{n \to \infty}{\longrightarrow} 1.$$
\end{conj*}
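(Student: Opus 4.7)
The plan is to prove part $(i)$, the impossibility statement, which is the content of the paper according to the abstract. The starting point is the reduction enabled by the averaging in the definition of overlap: for any estimator $\hat \pi$, the expectation $\dE[\ov(\hat \pi,\pi^*)]$ can be rewritten as a sum over $i \in [n]$ of the agreement probabilities between $\hat \pi$ and a sample from the posterior of $\pi^*$ given $(\cG,\cH)$ (this uses \eqref{eq:invariance_overlap} together with the uniformity of $\pi^*$). The Bayes-optimal estimator is the coordinate-wise posterior mode, so it suffices to show that for a positive fraction of $i \in [n]$ the posterior distribution of $\pi^*(i)$ given $(\cG,\cH)$ is spread out over $\omega(1)$ roughly equally likely candidates.

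The structural fact I would exploit is that the intersection graph $\cG \cap \cG'$ of two-colored edges is itself an \ER graph with parameter $\lambda s / n$, which is subcritical when $\lambda s \leq 1$. In this regime, with probability $1-o(1)$ every connected component is a tree, and the component-size law has exponential tails, so a positive fraction of vertices lies in components of any prescribed finite shape. Conditionally on the intersection graph, the monochromatic blue and red edges are two independent sparse \ER graphs on $[n]$ with parameter $\lambda(1-s)/n$, which provides exchangeability between components of the same isomorphism type.

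The probabilistic-method step is the construction of many alternative permutations $\tilde \pi$ indistinguishable from $\pi^*$. I would focus on pairs $(C,C')$ of tree components of the intersection graph satisfying (a) $C$ and $C'$ have the same (rooted or unrooted) isomorphism type and (b) the pair is \emph{isolated}, meaning that no monochromatic edge of $\cG$ or $\cH$ is incident to $C \cup C'$. For any vertex-level isomorphism $\varphi : C \to C'$, composing $\pi^*$ with the swap built from $\varphi$ produces a permutation $\tilde \pi$ whose posterior likelihood equals that of $\pi^*$: the intersection graph is preserved by construction, and the monochromatic edges are preserved as well because of the isolation assumption. Using a second moment method on the number of isolated pairs of a well-chosen tree shape $\tau$ with non-trivial automorphism, I would show that this count diverges in probability, so that a positive proportion of vertices have at least one alternative image with equal posterior mass, forcing the posterior marginals to be spread and ruling out positive overlap.

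The main obstacle will be the isolation book-keeping and the choice of $\tau$: one must simultaneously control (i) the size distribution of intersection components, (ii) the probability that such a component avoids monochromatic edges on both sides (a $\poly(n)$ cost one has to beat by having enough candidate pairs), and (iii) the combinatorics of isomorphism types so that the chosen shapes admit a non-trivial $\varphi$ moving enough vertices. The fact, announced in the abstract, that the resulting bound on the maximal overlap is tight only in the noiseless case $s=1$ suggests that the isolation condition introduces slack in the noisy regime, and that matching the conjectured threshold $\lambda s = 1$ up to constants -- rather than exactly quantifying the overlap -- is what this technique can deliver.
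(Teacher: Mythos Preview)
Your proposal has a genuine gap, and it is in the sentence ``a positive proportion of vertices have at least one alternative image with equal posterior mass, forcing the posterior marginals to be spread and ruling out positive overlap.'' Exhibiting alternatives for a \emph{positive} proportion $\beta<1$ of vertices does not rule out partial recovery: nothing in your construction touches the remaining $(1-\beta)n$ vertices, and the MAP estimator could match all of them correctly, yielding overlap $\geq (1-\beta)n$. To prove part~$(i)$ you must confuse $(1-o(1))n$ vertices, not merely $\Theta(n)$.

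Your isolation requirement is precisely what makes $\beta<1$. A tree component $C$ of $\cG\land\cG'$ with no incident monochromatic edge of $\cG$ or $\cG'$ is in fact a connected component of the union graph $\cG\lor\cG'$, which is an \ER graph with edge probability $\lambda(2-s)/n$. Whenever $s<1$, the fraction of vertices that lie in such ``doubly isolated'' components is bounded away from~$1$ (for instance, when $\lambda(2-s)>1$ a giant component of $\cG\lor\cG'$ swallows a positive fraction of vertices, none of which can be isolated in your sense). So your swaps act on only a strict sub-fraction of vertices, and you end up with a bound of the form $\ov(\hat\pi,\pi^*)\leq (1-\delta(\lambda,s))n$, not $o(n)$. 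Your closing remark that ``the isolation condition introduces slack'' diagnoses the symptom, but the slack is not just in the overlap constant: it prevents the argument from reaching part~$(i)$ at all.

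The paper's key idea is to drop isolation. One permutes \emph{every} small tree component of $\cG\land\cG'$ uniformly within its isomorphism class; this acts on $(1-c(\lambda s)-o(1))n$ vertices, hence on $(1-o(1))n$ when $\lambda s\le 1$. Such a permutation is an automorphism of $\cG\land\cG'$, but relabeling $\cG$ by it may create \emph{extra double edges}: a monochromatic blue edge $\{u,v\}$ of $\cG$ landing on a monochromatic red edge $\{\sigma(u),\sigma(v)\}$ of $\cG'$. Since the posterior depends only on $e(\cG\land\cG')$ (equation~\eqref{eq:joint_distribution}), this is the only obstruction. The crucial lemma is a Poisson approximation showing that, over the randomness of the monochromatic edges, the number of extra double edges converges to a Poisson law with constant mean; hence with probability bounded away from $0$ there are none, and by the probabilistic method the required permutations exist w.h.p. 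A pigeonhole argument over $p$ such pairwise far-apart permutations then yields the overlap bound.

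Two smaller points. First, requiring $\tau$ to have a non-trivial automorphism is unnecessary: swapping two \emph{distinct} isomorphic components already moves every vertex in them. Second, the ``$\poly(n)$ cost'' you anticipate for isolation does not arise for components of bounded size; the isolation probability is $\Theta(1)$, which is exactly why isolated components form a positive but not $(1-o(1))$ fraction.
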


\paragraph{Results in the regime with constant mean degree and correlation}
In this paper, we work in the regime where $\lambda >0$ and $s \in [0,1]$ are fixed constants. Our results prove part $(i)$ of the conjecture, which had not been previously studied, and give an upper bound on the maximal reachable overlap in case $(ii)$. Let us mention straightaway the related results in our regime that are helpful for our conjecture: \cite{Ganassali20a} prove that partial recovery is possible (in polynomial time) in a region $\cR := \{(\lambda,s); \: \lambda\in [1,\lambda_0) \text{ and } s \in (s^*(\lambda),1] \}$ for some function $s^*(\lambda)<1$, so that interestingly the case $\lambda > \lambda_0$ is left open, nevertheless much in step with $(ii)$. Previous results from \cite{Hall20} showed that partial reconstruction was feasible for $\lambda s>C$, with an unspecified constant $C>20$. At the very time when this paper is being finished, new results from \cite{wu2021settling} are significantly improving these results, narrowing down the gap for $(ii)$. When translated with our notations, it is shown that partial alignment is possible (theoretically) if $\lambda s \geq 4+\eps$. These results are summed up in a diagram in Figure \ref{fig:diagram}. In particular, our bound is tight and our conjecture is almost solved for the case $s=1$, with a remaining gap $[\lambda_0,4]$ being still open. 

\begin{figure}[H]
	\centering
	\includegraphics[scale=0.92]{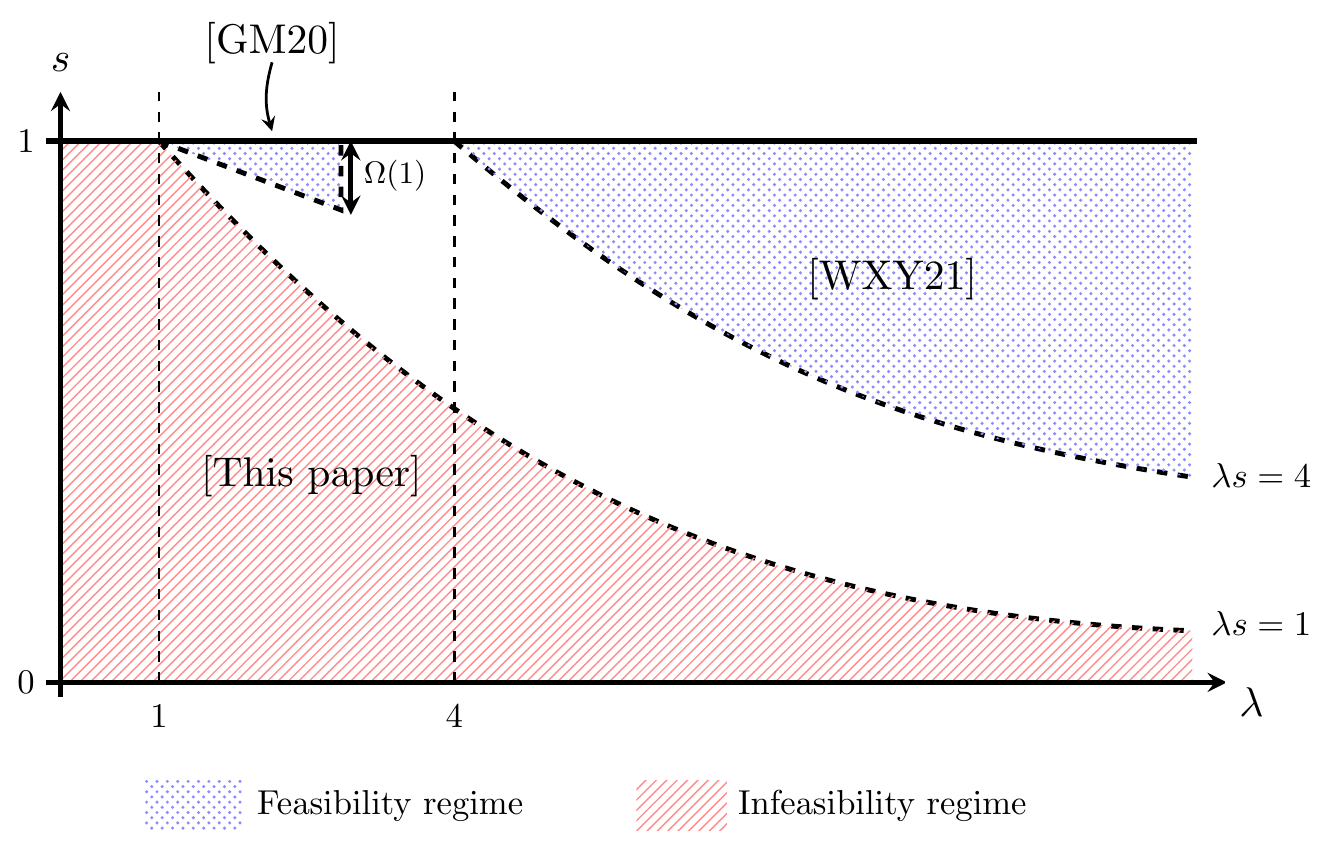}
	\caption{Diagram of the $(\lambda,s)$ regions where partial reconstruction is known to be impossible (resp. possible), in the sparse regime where $\lambda, s$ are fixed constants.}
	\label{fig:diagram}
\end{figure}

\paragraph{Main result} The main result of our paper is as follows:
\begin{theorem}\label{theorem:sparse_threshold} 
For $\lambda>0$ and $s\in[0,1]$, we have for any $\alpha>0$, for any estimator $\hat{\pi}$:
\begin{equation}
\dP\left(\ov(\hat{\pi},\pi^*) > (c(\lambda s)+ \alpha) n  \right) \underset{n \to \infty}{\longrightarrow} 0,
\end{equation}
where $c(\mu)$ is the greatest non-negative solution to the equation $e^{-\mu x}=1-x$.
\end{theorem}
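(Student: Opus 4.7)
I would exploit an invariance of the joint law $(\cG, \cH, \pi^*)$ under the following transformation: replace $\pi^*$ by $\pi^* \circ \tau^{-1}$ for any permutation $\tau$ that is (i) a function of the intersection graph $F := \cG \cap \cG'^{\pi^{*-1}} \sim \mathrm{ER}(n, \lambda s / n)$ and (ii) an automorphism of $F$ in the sense $F^\tau = F$. Conditionally on $F$, the edges of $\cG$ and of $\cG'$ outside $F$ are independent Bernoulli on the complementary slots, so the joint law of $(\cG, \cG')$ is invariant under $\cG' \mapsto \cG'^\tau$; since $\cH = \cG'^{\pi^*} = (\cG'^\tau)^{\pi^* \tau^{-1}}$ (using the convention $(\cG'^\tau)^\sigma = \cG'^{\sigma\tau}$), this yields $(\cG, \cH, \pi^*) \stackrel{d}{=} (\cG, \cH, \pi^* \tau^{-1})$, hence $\ov(\hat\pi, \pi^*) \stackrel{d}{=} \ov(\hat\pi, \pi^* \tau^{-1})$ for any estimator $\hat\pi(\cG, \cH)$. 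Averaging over $\tau$ uniform in a random group $G = G(F)$ of $F$-automorphisms and applying orbit--stabilizer gives
\[
\dE[\ov(\hat\pi, \pi^*)] \le \dE\Big[\sum_{i=1}^n 1/|\mathrm{Orb}_G(i)|\Big].
\]

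\paragraph{Constructing $G$ via tree swaps.} I would take $G$ to be generated by (i) the internal automorphisms of each tree component of $F$ and (ii) involutions pairing isomorphic tree components, canonically determined from $F$ (e.g., via lexicographic labels). The ``careful probabilistic method'' of the paper should ensure that the pairings are $F$-measurable, assemble into a well-defined group action, and leave only $o(n)$ vertices unpaired. Under this $G$, a vertex $i$ in a tree component of isomorphism class $C$ with $N_C$ copies has orbit of size $|\mathrm{Orb}_{\mathrm{Aut}(T_C)}(i)| \cdot N_C$, and the total contribution of class $C$ to $\sum_i 1/|\mathrm{Orb}_G(i)|$ simplifies to $r(T_C)$, the number of $\mathrm{Aut}(T_C)$-orbits on $V(T_C)$; vertices in the giant component of $F$ (if $\lambda s > 1$) have trivial orbit and contribute $c(\lambda s)\,n(1+o(1))$ by concentration of the giant size.

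\paragraph{Bounding the orbit sum and conclusion.} For the non-giant contribution $\sum_{C : N_C \ge 1} r(T_C)$, I would use $r(T) \le |V(T)|$ together with Cayley's identity $\sum_{T : |T|=k} 1/|\mathrm{Aut}(T)| = k^{k-2}/k!$ and a Markov estimate on $N_T$ to bound the expected contribution from shapes of size $k > K$ by
\[
n \cdot \sum_{k > K} \mu^{k-1} k^{k-1} e^{-\mu k}/k! = n \cdot \dP\bigl(K < |\mathrm{PGW}(\lambda s)| < \infty\bigr),
\]
which is the tail of the Borel--Tanner distribution; this vanishes as $K \to \infty$ by the exponential tails of (possibly dual-)subcritical Poisson--Galton--Watson trees. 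Shapes of size $\le K$ have $N_T$ Poisson of mean $\Theta(n) \to \infty$, contributing $O(1)$ in expectation. Combining yields $\dE[\ov(\hat\pi, \pi^*)] \le (c(\lambda s) + \alpha)\,n$ for any $\alpha > 0$. I expect the main obstacles to be (a) the $F$-measurable construction of the tree-component pairings and their consistent packaging into a group action (the paper's probabilistic method), and (b) upgrading the in-expectation bound to the high-probability statement $\dP(\ov > (c(\lambda s) + \alpha) n) \to 0$ by a concentration argument on $\sum_i 1/|\mathrm{Orb}_G(i)|$, since a bare Markov bound on the expectation falls short.
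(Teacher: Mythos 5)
Your high-level idea --- permuting isomorphic tree components of the intersection graph to manufacture many statistically equivalent ground truths --- is exactly the paper's starting point, but the proposal has two genuine gaps. The first is the invariance claim itself. Conditionally on $\cG \land \cG' = F$, each pair outside $F$ is blue-only, red-only, or absent, but \emph{never both} (a pair carrying both colours would by definition lie in $F$); the blue and red indicators of a single pair are therefore mutually exclusive. After relabelling the red graph by a nontrivial automorphism $\tau$ of $F$, the pair $(\cG, \cG'^{\tau})$ can contain a two-coloured pair outside $F$, an event of probability zero under the original law, so $(\cG,\cG') \overset{d}{=} (\cG,\cG'^{\tau})$ is false. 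Equivalently, by \eqref{eq:joint_distribution} the posterior weight of $\pi^*\tau^{-1}$ equals that of $\pi^*$ only when $e(\cG \land \cG'^{\tau}) = e(\cG\land\cG')$, i.e.\ only when no \emph{extra double edges} are created --- and this fails with probability bounded away from zero, since the number of such edges is asymptotically Poisson with constant mean (Theorem \ref{theorem:poisson}). Handling this is the central technical difficulty of the paper: the tree permutations are drawn at random, independently of the monochromatic edges, and the probabilistic method extracts $p$ permutations creating no extra double edge. Your canonical, $F$-measurable group $G$ of automorphisms leaves no randomness to exploit for this purpose, so the averaging inequality $\dE[\ov(\hat\pi,\pi^*)] \le \dE[\sum_i 1/|\mathrm{Orb}_G(i)|]$ does not follow.

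The second gap is the one you flag yourself but do not close: passing from an in-expectation bound to $\dP(\ov(\hat\pi,\pi^*) > (c(\lambda s)+\alpha)n) \to 0$. Concentration of $\sum_i 1/|\mathrm{Orb}_G(i)|$ would at best control the \emph{conditional expectation} of the overlap given $(\cG,\cH)$, and conditional Markov then yields only a constant (not vanishing) bound on the probability. The paper's mechanism is different and essential: for every fixed $p$ it produces $p$ permutations of equal posterior weight that are \emph{pairwise far apart}, $\Fix(\sigma_i,\sigma_j) \le c(\lambda s) n + o(n)$, so that at most $O(1/\alpha)$ of the $p$ translates $\pi^*\circ\sigma_i$ can simultaneously overlap $\hat\pi$ on more than $(c(\lambda s)+\alpha)n$ points; a pigeonhole then gives $\dP(\ov > (c(\lambda s)+\alpha)n) \le 1/(p(\alpha-o(1))) + o(1)$, and $p \to \infty$ finishes. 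Some quantitative pairwise-separation statement of this kind (the paper's Theorem \ref{theorem:autos}$(ii)$ and Lemma \ref{lemma:controle_overlap_ij}) is indispensable and is absent from your argument.
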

Note that a well-known result (see e.g. \cite{Bollobas2001}) is that $c(\mu)$ is the typical fraction of nodes in the largest component of an \ER graph with average degree $\mu$, and that $c(\mu)=0$ if $\mu \leq 1$, and $c(\mu) \in (0,1)$ whenever $\mu >1$. Hence, Theorem \ref{theorem:sparse_threshold} implies that partial reconstruction is impossible for $\lambda s \leq 1$. Moreover, if $\lambda s > 1$, any estimator can reach an overlap of at most $c(\lambda s) n+o(n)$. Note that $c(\lambda s)$ is the typical fraction of nodes in the largest component of the intersection graph. 

\paragraph{Further related work} 
Graph matching has also been widely studied in the case where the mean degree and correlation are not fixed constants. The model is the same, with adapted notations: the probability for two-colored (resp. monochromatic, non-) edges are now $qs$ (resp. $q(1-s)$, $1-q(2-s)$), with $s \geq q$ and $q$ and $s$ that may depend on $n$. Note that our study focuses on the sparse setting where $q=\lambda/n$ and $s$ is constant, and that interesting results in other regimes are known for partial, almost exact, and exact recovery. We hereafter give them in detail, for completeness.

\begin{itemize}
\item Exact recovery (when $\ov(\hat{\pi},\pi^*)=n$) has been studied theoretically in recent work from \cite{wu2021settling} refining previous results of \cite{Cullina2017}, exhibiting a tight threshold at $$\frac{nq \left(\sqrt{s} - \sqrt{q}\right)^2}{\log n} = 1,$$

with the only condition that\footnote{$q/s$ is often referred to as the mean degree in the \emph{parent graph} of $\cG,\cG'$. Indeed, another common way of generating the two graphs under the correlated \ER model is to consider $\cF$ a parent \ER graph of $n$ nodes and mean degree $q/s$, and perform two independent sub-samplings of $\cF$, keeping each edge independently with probability $s$, forming $\cG$ and $\cG'$, two correlated \ER graphs of mean degree $q$.} $q/s$ is bounded away from 1.

\item For almost exact recovery (i.e. $\ov(\hat{\pi},\pi^*) \geq (1-\eps)n$ for all $\eps>0$), in a sparse regime where $q/s = n^{-\Omega(1)}$, it is known (\cite{Cullina18}) that almost exact recovery is possible if and only if $nqs \to \infty$. In a denser regime where $q/s = n^{-o(1)}$, \cite{wu2021settling} show that there is a tight threshold exhibiting an "all-or-nothing" phenomenon at
$$\frac{nq \left(s \log \frac{s}{q} - (s-q)\right)}{\log n} = 2,$$ above which almost exact recovery is possible and below which even partial recovery is impossible.

\item For partial recovery, the first investigation made by \cite{Hall20} -- though rather difficult to translate in our model -- showed that $nqs \to 0$ is an impossibility condition, whereas $nqs \geq C$ (with a large, non-explicit constant $C$), together with some additional sparsity constraints, ensures feasibility. As mentioned, \cite{wu2021settling} improve these results, showing that in the case $q/s = n^{-\Omega(1)}$, $nqs \geq 4+\eps$ suffices to ensure possibility. In addition, an impossibility condition of the form $n q s \leq 1-\eps$ is also established, but in a denser case, where $nq/s = \omega(\log^2 n)$. Note that this last impossibility result does not cover our regime, where both the mean degree $nq$ and the correlation parameter $s$ are of order $1$.

\end{itemize} 
For the impossibility part, \cite{wu2021settling} works with the mutual information $I(\pi^*; \cG, \cG')$, closely related to the minimum mean squared error. They are able to derive an upper bound on the expectation of $\ov(\hat{\pi},\pi^*)$, for any estimator, which happens to be $o(1)$ when the mean degree in the parent graph of $\cG$ and $\cG'$ is at least of order $\log^2 n$, but not when $\lambda, s$ are of order $1$. In our result, we do not work directly with the mutual information, but we are considering the posterior distribution of $\pi^*$: in simple words, we show that under the assumption $\lambda s<1$ the posterior distribution puts equal weights on permutations that overlap only on a vanishing fraction of points. This is done by building ad hoc permutations with the probabilistic method.

In this paper, we derive information-theoretic results: our proof is not constructive, i.e. not related to a particular algorithm. The search for efficient algorithms is a very active field of research: using spectral methods (\cite{Feizi16,fan2019ERC}), degree profiles (\cite{Ding18}), convex relaxation (\cite{dym2017ds++}), etc. Unfortunately, except from \cite{Ganassali20a}, these algorithms are not known to give a positive fraction of overlap in the regime $\lambda s \geq 1$, hence leaving the question of the tightness of our bound open.

\section{Main results and global intuition} 
\subsection{Some definitions} 
Throughout the paper, some proposition $A_n$ is said to be true \emph{with high probability} (w.h.p.) if $\dP(A_n) \to 1$ when $n \to \infty$.

\paragraph{Finite sets, permutations}
For all $n>0$, we define $[n] := \left\{ 1, 2, \ldots, n \right\}$. For any finite set $\mathcal{X}$, we denote by $\card{\mathcal{X}}$ its cardinal. $\cS_{\mathcal{X}}$ is the set of permutations on $\mathcal{X}$. We also denote $\cS_{k} = \cS_{[k]}$ for brevity, and we will often identify $\cS_{k}$ to $\cS_{\mathcal{X}}$ whenever $\card{\mathcal{X}}=k$. For any permutations $\sigma,\sigma' \in \cS_n$ we denote by $\Fix(\sigma,\sigma')$ the number of points on which $\sigma=\sigma'$, namely 
\begin{equation*}
\Fix(\sigma,\sigma') := \sum_{i=1}^{n} \ind{\sigma(i)=\sigma'(i)}.
\end{equation*}

\paragraph{Graphs}
Through all the paper, we will implicitly consider that every graph $G$ of size $n$ has the canonical vertex set $[n]$. We will denote by $E(G)$ its edge set and $e(G)$ its number of edges.

For any pair of graphs $(G,G')$, both labeled on $[n]$, we denote by $G \lor G'$ (resp. by $G \land G'$) the union graph (resp. intersection graph) of $G$ and $G'$. The symmetric difference of $G$ and $G'$, denoted by $G \triangle G'$, is the subgraph made of edges of $G \lor G'$ that are not in $G \land G'$.

In the case where edges are colored (say $G$ is blue and $G'$ is red), these definitions extend to ensure colour preservation: note e.g. that in this case $G \land G'$ is simply the subgraph of $G \lor G'$ consisting of two-colored edges (see Figure \ref{fig:img_GunionGp}). 

\begin{figure}[H]
	\centering
	\includegraphics[scale=0.8]{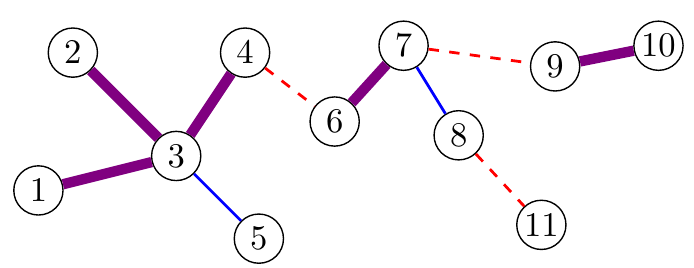}
	\caption{The graph $\cG \lor \cG'$ with $(\cG,\cG')$ of Figure \ref{fig:img_GGp}. For the sake of readability, the two-colored edges of $\cG \land \cG'$ are always drawn thick and purple.}
	\label{fig:img_GunionGp}
\end{figure} 

When the pair $(\cG,\cG')$ is drawn under the correlated \ER model, for all $u,v \in [n]$, we write $u \noir v$ (resp. $u \rouge v$) if $u$ and $v$ are connected in $\cG$, that is the edge is either blue or two-colored (resp. in $\cG'$, either red or two-colored).

For $G$ a graph with vertex set $[n]$ and $\sigma \in \cS_n$, we denote by $G^{\sigma}$ the \emph{relabeling of $G$ with $\sigma$}, which is the graph with same vertex set $[n]$ and edges $(\sigma(u), \sigma(v))$ for all $(u,v) \in E(G)$.

Finally we recall the definition of $c(\mu)$: for all $\mu > 0$, $c(\mu)$ is the greatest non-negative solution to the equation $e^{-\mu x}=1-x$. We also recall the fact that for $\mu \leq 1$, $c(\mu)=0$.

\subsection{General intuition on main result} 
Let us describe the general intuition for our result : recall that we are given $(\cG,\cH)$ drawn under the correlated \ER model with planted relabeling $\pi^*$. The idea of the argument for impossibility is to show that, there are w.h.p. lots of permutations that have the same weight for the posterior distribution of $\pi^*$ given $\cG, \cH$, and that are far apart. In other words, an informal statement is as follows :

\begin{informal*}
We want to show that there exists lots of relabelings $\cG^{\sigma_i}$ of $\cG$ such that:
\begin{itemize}
    \item[$(i)$] There is no way of deciding (statistically) whether the two graphs we observe are $(\cG, \cG')$ or some $(\cG^{\sigma_i}, \cG')$.
    \item[$(ii)$] These relabelings are far apart from each other and small components of $\cG \land \cG'$.
\end{itemize}
\end{informal*}

Let us give a formal version of the previous intuition. First note that for any labeled graphs $G,G'$ on $[n]$:
\begin{flalign*}
\dP(\cG=G,\cG'=G') 
& = \left(\frac{\lambda s}{n}\right)^{e(G \land G')} \left(\frac{\lambda (1-s)}{n}\right)^{e(G \triangle G')} \left(1 - \frac{\lambda (2-s)}{n}\right)^{\binom{n}{2} - e(G \lor G')}.
\end{flalign*}
Since $$e(G \lor G') = e(G) + e(G') - e(G \land G') \quad \mbox{and} \quad e(G \triangle G') = e(G \lor G') - e(G \land G'),$$  
$\dP(\cG=G,\cG'=G')$ is uniquely determined by $e(G), e(G')$ and $e(G \land G')$. In particular, the dependence of the joint distribution in $e(G \land G')$ is given by:

\begin{equation}\label{eq:joint_distribution}
\dP(\cG=G,\cG'=G')  \propto \left[\frac{s(n-\lambda(2-s))}{\lambda (1-s)^2}\right]^{e(G \land G')}.
\end{equation}

In view of \eqref{eq:joint_distribution}, preserving the posterior distribution by relabeling a graph $\cG$ is simply preserving the number of edges of their intersection graph. We now have a formal rephrasing for our conditions $(i)$ and $(ii)$ above: we encapsulate them in a theorem, which will constitute the bulk of our paper.

\begin{theorem}\label{theorem:autos} 
Fix an integer $p >0$. Consider $(\cG,\cG')$ drawn under the correlated \ER model. Then, with high probability,  there exists $\left\lbrace \sigma_i \right\rbrace_{i \in [p]}$ -- that depend on the intersection graph $\cG \land \cG'$ -- such that 
\begin{itemize}
\item[$(i)$] $\forall i \in [p], \; e\left(\cG^{\sigma_i} \land \cG'\right) = e\left(\cG \land \cG'\right)$,

\item[$(ii)$] $\forall i,j \in [p], \; i \neq j \implies \Fix(\sigma_i,\sigma_j) \leq c(\lambda s) n + o(n)$, where the $o(n)$ is independent of $i,j \in [p]$.
\end{itemize}
\end{theorem}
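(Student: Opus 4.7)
The plan is to build the $\sigma_i$ as colour-preserving automorphisms of the non-giant part of the intersection graph $H := \cG \land \cG'$, and to control the intersection edge count via the probabilistic method.

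\emph{Structural setup.}
Let $V_{\mathrm{giant}}$ denote the vertex set of the largest component of $H$, which is distributed as $\mathrm{ER}(n, \lambda s / n)$, and set $V_{\mathrm{small}} := [n] \setminus V_{\mathrm{giant}}$. Classical results on sparse \ER graphs yield $|V_{\mathrm{giant}}| = c(\lambda s) n + o(n)$ w.h.p.\ (with $V_{\mathrm{giant}} = \emptyset$ when $\lambda s \leq 1$), and $H|_{V_{\mathrm{small}}}$ is then a forest whose components have size $O(\log n)$; a standard first/second-moment computation shows that each fixed tree shape $\tau$ appears $\Theta(n)$ times as a component. Denote by $\Gamma$ the group of permutations of $[n]$ that fix $V_{\mathrm{giant}}$ pointwise and restrict to an automorphism of the forest $H|_{V_{\mathrm{small}}}$: $\Gamma$ is generated by internal automorphisms of components and by swaps of isomorphic components, and it acts richly on $V_{\mathrm{small}}$.

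\emph{Controlling condition $(i)$.}
Writing $E(\cG) = E_{\mathrm{both}} \sqcup E_{\mathrm{blue}}$ and $E(\cG') = E_{\mathrm{both}} \sqcup E_{\mathrm{red}}$ with $E_{\mathrm{both}} := E(\cG \land \cG')$, any $\sigma \in \Gamma$ permutes $E_{\mathrm{both}}$ as an automorphism of $H$, and a short case analysis on edge types gives
\[
e(\cG^{\sigma} \land \cG') - e(\cG \land \cG') = B(\sigma), \qquad B(\sigma) := \left| \{ e \in E_{\mathrm{blue}} : \sigma(e) \in E(\cG') \} \right|,
\]
so condition $(i)$ is equivalent to $B(\sigma) = 0$. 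For $\sigma$ uniformly random in $\Gamma$, a blue-only edge only contributes to $B(\sigma)$ when one of its endpoints lies in $V_{\mathrm{small}}$ and is moved by $\sigma$, in which case its image lands on any given slot with probability $O(1/n)$; summing over the $O(n)$ blue-only edges yields $\dE[B(\sigma)] = O(1)$.

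\emph{Main difficulty and conclusion.}
The main obstacle is to upgrade this expectation bound to a positive-probability exact equality $\dP(B(\sigma) = 0) = \Omega(1)$, uniformly over the graph realisation. This is where the careful probabilistic method intervenes: the indicators entering $B(\sigma)$ should be weakly correlated, so a Chen--Stein-type comparison of $B(\sigma)$ with a $\mathrm{Poisson}(\dE B(\sigma))$ law provides the desired bound. Once this is secured, I would draw $\sigma_1, \dots, \sigma_p$ independently from the conditional law $\{\sigma \in \Gamma : B(\sigma) = 0\}$: the event $\{B(\sigma) = 0\}$ only constrains $\sigma$ on the few vertices incident to blue-only or red-only edges, so the conditional law still acts like a uniform random permutation on each $\Gamma$-orbit in $V_{\mathrm{small}}$, which has size $\Theta(n)$. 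Consequently $\dE[\Fix(\sigma_i,\sigma_j)] \leq |V_{\mathrm{giant}}| + o(n) = c(\lambda s) n + o(n)$, and Markov combined with a union bound over the $\binom{p}{2}$ pairs extracts the desired family $\{\sigma_i\}_{i \in [p]}$ simultaneously satisfying $(i)$ and $(ii)$.
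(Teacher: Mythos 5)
Your overall strategy is the paper's: permute isomorphic small tree components of $\cG \land \cG'$, show the resulting permutations disagree outside the giant component, and use a Poisson-type argument to show that no extra intersection edges ($B(\sigma)=0$, the paper's $\Delta(\sigma)=0$) occurs with positive probability. However, the step you flag as "the main difficulty" is precisely the content of the paper's proof, and you have not supplied it. Two things are missing. First, you need the \emph{joint} statement $\dP\bigl(B(\sigma_1)=0,\ldots,B(\sigma_p)=0\bigr)=\Omega(1)$: even if you draw $\sigma_1,\ldots,\sigma_p$ independently, the events $\{B(\sigma_i)=0\}$ are all functions of the same monochromatic edges, so their asymptotic independence must be proved, not assumed. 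The paper does this by a factorial-moment computation (Theorem 4.1), and the computation only closes because of a quantitative control on the number of \emph{common fixed edges} shared by pairs and triples of the $\sigma_i$ (Lemma 3.3, hypotheses (H1)--(H3)); this is the key structural input your sketch does not identify, and without it the "weakly correlated indicators" claim is unsupported. Second, you place the randomness in $\sigma$ for a fixed graph and ask for a bound "uniformly over the graph realisation"; there the indicators in $B(\sigma)$ are coupled through a single random permutation per tree type and the Chen--Stein setup is genuinely harder. The paper instead fixes the $\sigma_i$ first (as functions of $\cG\land\cG'$ only) and exploits the conditional i.i.d.\ structure of the monochromatic edges --- this ordering also matters because the theorem (and its use in proving Theorem 1.1) requires the $\sigma_i$ to depend only on the intersection graph, whereas your conditioning on $\{B(\sigma)=0\}$ makes them depend on the monochromatic edges as well.

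Smaller but real defects: the complement of the giant component is not a forest w.h.p.\ (there are unicyclic components); the paper handles this by restricting to tree components of size at most $K(n)=\lfloor\sqrt{\log n}\rfloor$ and showing only $o(n)$ vertices are excluded (Lemma 2.2). For condition $(ii)$, Markov's inequality applied to $\Fix(\sigma_i,\sigma_j)$ with mean $c(\lambda s)n+o(n)$ does not give $\Fix\le c(\lambda s)n+o(n)$ w.h.p.; you must isolate the contribution of the permuted trees, whose mean is $n^{o(1)}$, and apply Markov (or, as the paper does, a Chernoff bound via the Laplace transform of the number of fixed points of a uniform permutation) to that excess. Finally, the event $\{B(\sigma)=0\}$ constrains $\sigma$ at a constant fraction of the vertices (a positive fraction is incident to monochromatic edges), so "only constrains $\sigma$ on the few vertices" is not accurate; the correct argument is that conditioning on an event of probability $\Omega(1)$ inflates the expectation of a nonnegative variable by at most a constant factor.
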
 Let us now explain how Theorem \ref{theorem:autos} implies our impossibility result via a simple pigeonhole principle. 

\begin{proof}[Proof of Theorem \ref{theorem:sparse_threshold}]
Let us take $\alpha>0$. We want to control the probability that the overlap between an estimator $\hat{\pi}$ and $\pi^*$ is greater than $\alpha n + c(\lambda s) n$. Fix $\eps>0$, and take $p$ large enough so that $$\alpha\eps  p >2.$$ 
First note that point $(i)$ together with \eqref{eq:joint_distribution} gives that the joint probability of $(\cG,\cG',\pi^*)$ is is equal to that of $(\cG^{\sigma_i},\cG',\pi^*)$, for all $i \in [p]$. Thus, for all estimator $\hat{\pi}$ depending on $\cG, \cH = \cG'^{\pi^*}$, one has
\begin{equation}\label{eq:egalite_distrib}
\forall i \in [p], \; \ov\left(\hat{\pi}(\cG^{\sigma_i}, \cH),\pi^*\right) \overset{(d)}{=} \ov\left(\hat{\pi}(\cG , \cH),\pi^*\right),
\end{equation} and by \eqref{eq:invariance_overlap}, we also have
\begin{equation}
\forall i \in [p], \; \ov\left(\hat{\pi}(\cG^{\sigma_i}, \cH),\pi^*\right) = \ov\left(\hat{\pi}(\cG, \cH),\pi^* \circ \sigma_i\right).
\end{equation}
Let
\begin{equation*}
X :=  \sum_{i \in [p]} \mathbf{1}_{\ov(\hat{\pi},\pi^* \circ {\sigma_i})>(c(\lambda s) + \alpha)n}
\end{equation*} Note that because of point $(ii)$, all $\Fix(\pi^* \circ {\sigma_i},\pi^* \circ {\sigma_j})$ are at most $c(\lambda s) n + o(n)$ for $i \neq j \in [p]$. Thus, there are at least $X \times (\alpha -o(1)) n$ distinct points among the node set $[n]$. This gives that one necessarily has 
\begin{equation}
X \leq \frac{1}{\alpha -o(1)}.
\end{equation}

Then, taking the expectation and considering the event on which the set $\left\lbrace \sigma_i \right\rbrace_{i \in [p]}$ of Theorem \ref{theorem:autos} exists -- which happens with probability $1-o(1)$ -- gives
\begin{flalign*}
\dE\left[X \right] & \geq \sum_{i=1}^{p} \dP\left(\ov(\hat{\pi},\pi^* \circ \sigma_i ) > (c(\lambda s) + \alpha)n  \right) - p \times o(1) \\
& = p \times \dP\left(\ov(\hat{\pi},\pi^*) > (c(\lambda s) + \alpha)n \right) - o(1).
\end{flalign*} Hence, 
\begin{equation}
\dP\left(\ov(\hat{\pi},\pi^*) > (c(\lambda s) + \alpha)n \right) \leq \frac{1}{p (\alpha -o(1))} + o(1).
\end{equation} 
For $n$ large enough, the right-hand side of the last term is less that $\frac{1}{p(\alpha/2)}$, which is less than $\eps$. This proves as desired that for all $\alpha >0$
\begin{equation}
\dP\left(\ov(\hat{\pi},\pi^*) > (c(\lambda s) + \alpha)n \right) \underset{n \to \infty}{\longrightarrow} 0.
\end{equation}
\end{proof}

We are now left to understand how to build ad hoc permutations verifying points $(i)$ and $(ii)$ of Theorem \ref{theorem:autos}. In order to build these permutations, we are going to relabel the vertices on small tree components of the intersection graph $\cG \land \cG'$. As a first step, we hereafter check that they indeed nearly cover the whole graph, when letting aside the giant component.

\subsection{Vertices on small tree components}
We briefly recall the definition of the simple \ER model $G(n,p)$: it consist in drawing a (single) graph with node set $[n]$ in which every edge is independently present with probability $p$. Let us begin with a classical result:

\begin{lemme}[\cite{Bollobas2001}, Corollary 5.8, Theorem 6.11]
	\label{lemma:bollobas_trees}
	Let $G\sim G(n,\mu/n)$ with $\mu>0$, and $a_n \to \infty$. Then, with high probability, $G$ has a giant component of order $c(\mu) n +o(n)$ and outside the giant component, at least $(1-c(\mu)) n-a_n$ vertices are on tree components.
\end{lemme}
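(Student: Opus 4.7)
The plan is to split the lemma into its two conclusions and treat them separately before combining by a simple vertex count. The first conclusion -- the existence w.h.p.\ of a unique giant component of order $c(\mu)n+o(n)$ -- is the classical Erd\H{o}s-R\'enyi giant component theorem. I would invoke its standard proof by exploration: couple the BFS exploration of the component of a uniformly chosen vertex to a Galton--Watson branching process with Poisson$(\mu)$ offspring, whose survival probability is exactly $c(\mu)$, the largest root of $e^{-\mu x}=1-x$. A second-moment calculation on the number of vertices lying in ``large'' components (those of size $\ge \log^2 n$, say) then concentrates the size of the largest component $\cC_1$ and simultaneously rules out any second component of linear size.

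For the second conclusion, let $B_n$ be the number of vertices lying outside $\cC_1$ on a component containing a cycle. Since $a_n\to\infty$, Markov's inequality reduces the claim $B_n\le a_n$ w.h.p.\ to the bound $\dE[B_n]=O(1)$. I would estimate $\dE[B_n]$ by a direct first-moment enumeration over isolated connected non-tree subgraphs. Using the crude count $N(k,k-1+j)\le k^{k-2}\binom{k^2}{j}$ for connected labelled graphs on $k$ vertices with $j\ge 1$ excess edges (take a spanning tree and add $j$ chords), the expected number of vertices on such an isolated subgraph of size $k$ is at most
\begin{equation*}
k\binom{n}{k}\sum_{j\ge 1}k^{k-2}\binom{k^2}{j}\left(\tfrac{\mu}{n}\right)^{k-1+j}\!\bigl(1-\tfrac{\mu}{n}\bigr)^{k(n-k)}.
\end{equation*}
Stirling's formula reduces this, for $k=o(\sqrt n)$, to $O\bigl(\poly(k)\cdot(\mu e^{1-\mu})^k\bigr)$; since $\mu e^{1-\mu}<1$ for every $\mu\ne 1$, summation over $k$ yields the desired $O(1)$ bound.

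The main obstacle is handling non-tree components of intermediate size in the supercritical regime $\mu>1$: the first-moment bound above is only reliable for $k=o(\sqrt n)$, and one must separately rule out non-tree components of size in the range $[\sqrt n,\,n/2]$ that are not $\cC_1$. Uniqueness of the giant disposes of components of linear size, and for the intermediate range one invokes the standard duality principle: the graph induced on $V\setminus\cC_1$ is stochastically dominated by a subcritical Erd\H{o}s-R\'enyi graph with mean degree $\mu(1-c(\mu))<1$ (a direct consequence of $e^{-\mu c(\mu)}=1-c(\mu)$), in which every component has size $O(\log n)$. Re-running the first-moment computation above on this subcritical remnant then gives $\dE[B_n]=O(1)$. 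Combined with $|\cC_1|=c(\mu)n+o(n)$, the number of vertices outside $\cC_1$ lying on tree components is at least $n-|\cC_1|-B_n\ge(1-c(\mu))n-a_n$ w.h.p., after absorbing the $o(n)$ correction into the diverging $a_n$.
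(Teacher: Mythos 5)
The paper does not actually prove this lemma: it is quoted from \cite{Bollobas2001} (Corollary 5.8, Theorem 6.11) and used as a black box, so there is no internal proof to compare against. Your argument is the standard textbook one and is essentially sound. The giant-component part is the classical exploration/branching-process theorem; the key quantitative input is the first-moment bound $\dE[B_n]=O(1)$ on vertices lying on complex (non-tree) components, and your spanning-tree-plus-chords count does the job: summing $\binom{k^2}{j}(\mu/n)^j$ over $j\ge 1$ contributes a factor $O(k^2/n)$, which is exactly what turns the $\Theta\bigl(n\cdot\poly(k)(\mu e^{1-\mu})^k\bigr)$ count for tree components into $O\bigl(\poly(k)(\mu e^{1-\mu})^k\bigr)$ for complex ones. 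The treatment of the intermediate range via duality is also standard, though ``stochastically dominated'' is not quite the right word: conditionally on $\cC_1$, the graph induced on $V\setminus\cC_1$ is distributed as $G(n',\mu/n)$ conditioned on a high-probability event, which is what licenses re-running the subcritical computation.

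Two caveats. First, your geometric-series bound requires $\mu e^{1-\mu}<1$ and hence excludes $\mu=1$, which the hypothesis $\mu>0$ allows; at criticality $\dE[B_n]$ is not $O(1)$ (complex components then carry order $n^{2/3}$ vertices), so the statement with an arbitrarily slowly diverging $a_n$ is genuinely delicate there -- although the paper only ever invokes the lemma with $a_n=o(n)$, for which the critical case is harmless. Second, the final step ``absorbing the $o(n)$ correction into the diverging $a_n$'' does not literally work: $|\cC_1|-c(\mu)n$ fluctuates on the scale $\sqrt{n}$, which can exceed a slowly diverging $a_n$. What your argument actually establishes is the clean form of the claim, namely that all but at most $a_n$ of the vertices \emph{outside} $\cC_1$ lie on tree components (i.e.\ $B_n\le a_n$ w.h.p.), which together with $|\cC_1|=c(\mu)n+o(n)$ is exactly what the paper uses when it takes $a_n=o(n)$ in the proof of Lemma \ref{lemma:small_trees}; the residual imprecision is already present in the lemma's own phrasing.
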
 We need here a slight adaptation of this result, showing that $(1-c(\mu))n-o(n)$ vertices are in fact on \emph{small} tree components. 
\begin{lemme}
	\label{lemma:small_trees}
	Let $G\sim G(n,\mu/n)$ with $\mu >0$, and $K(n) \to \infty$. Then with high probability, $1-c(\mu))n-o(n)$ vertices are on tree components of size at most $K(n)$.
\end{lemme}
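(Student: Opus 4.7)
The plan is to refine Lemma \ref{lemma:bollobas_trees} by showing that only a vanishing fraction of vertices can lie on tree components of size exceeding $K(n)$; combined with that lemma this gives the claim. Let $T_k$ denote the number of vertices of $G$ belonging to a tree component of size exactly $k$. By Cayley's formula (there are $k^{k-2}$ labeled trees on $k$ vertices) and a direct \ER count,
\begin{equation*}
\dE[T_k] \;=\; k\binom{n}{k} k^{k-2}\Big(\frac{\mu}{n}\Big)^{k-1}\Big(1-\frac{\mu}{n}\Big)^{k(n-k)+\binom{k}{2}-(k-1)},
\end{equation*}
accounting for the $k-1$ internal tree edges being present and all other edges incident to the $k$-set being absent.

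Applying Stirling to $k!$ and bounding $(1-\mu/n)^{k(n-k)} \leq \exp(-\mu k + \mu k^2/n)$ yields, uniformly in $k \leq \sqrt{n}$, an estimate of the form
\begin{equation*}
\dE[T_k] \;\leq\; C\,\frac{n}{k^{3/2}}\,\bigl(\mu e^{1-\mu}\bigr)^k
\end{equation*}
for some $C = C(\mu)$. Since $x \mapsto x e^{1-x}$ attains its maximum $1$ at $x=1$, the factor $(\mu e^{1-\mu})^k$ is at most $1$ for every $\mu > 0$, with geometric decay whenever $\mu \neq 1$. Since $K(n) \to \infty$, summation then gives
\begin{equation*}
\frac{1}{n}\sum_{k=K(n)+1}^{\lfloor\sqrt{n}\rfloor} \dE[T_k] \;\leq\; C\sum_{k > K(n)} k^{-3/2} \;\longrightarrow\; 0.
\end{equation*}
For $k > \sqrt{n}$, a crude first-moment bound shows that $\dE[\text{number of tree components of size } k]$ is already $o(1)$ once $k$ exceeds a sufficiently large multiple of $\log n$, so w.h.p.\ no tree component of $G$ has size larger than $O(\log n)$, and in particular none has size in $(\sqrt{n}, n]$.

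By Markov's inequality it follows that w.h.p.\ at most $o(n)$ vertices of $G$ lie on tree components of size greater than $K(n)$. Combining this with Lemma \ref{lemma:bollobas_trees}, applied with some $a_n \to \infty$ with $a_n = o(n)$, which guarantees that w.h.p.\ at least $(1-c(\mu))n - a_n$ vertices lie on tree components, we conclude by subtraction that w.h.p.\ at least $(1-c(\mu))n - o(n)$ vertices lie on tree components of size at most $K(n)$, as required. The only delicate point is the critical case $\mu = 1$, where the geometric decay disappears and one must rely on the $k^{-3/2}$ prefactor coming from Stirling; the assumption $K(n) \to \infty$ is exactly what makes the tail $\sum_{k > K(n)} k^{-3/2}$ vanish.
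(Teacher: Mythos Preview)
Your argument follows the paper's: bound the first moment of the number of vertices on tree components of size exceeding $K(n)$, and combine with Lemma~\ref{lemma:bollobas_trees}. Your sharper Stirling estimate, yielding a $k^{-3/2}$ prefactor rather than the paper's $k^{-1}$ coming from $\binom{n}{k}\le(en/k)^k$, is a genuine improvement: the paper's tail $\sum_k (\mu e^{1-\mu})^k/k$ is the harmonic series at $\mu=1$ and does not vanish as $K(n)\to\infty$, whereas your $\sum_{k>K(n)} k^{-3/2}$ does.

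There is, however, a slip in your handling of the range $k>\sqrt n$, precisely at the critical value $\mu=1$ you single out. The assertion that w.h.p.\ no tree component of $G(n,\mu/n)$ exceeds size $O(\log n)$ (which the paper also invokes) is false at $\mu=1$: there the largest components are of order $n^{2/3}$ and many of them are trees, so that the expected number of tree components of size $k$ is $\Theta(n k^{-5/2})$ for $k\ll n^{2/3}$, far from $o(1)$ when $k$ is a multiple of $\log n$. The repair is easy and makes the split at $\sqrt n$ unnecessary: replace $\binom{n}{k}\le n^k/k!$ by the sharper $\binom{n}{k}\le (n^k/k!)\,e^{-k(k-1)/(2n)}$; this cancels the $e^{\mu k^2/n}$ correction in your bound on $(1-\mu/n)^{k(n-k)}$ when $\mu\le 1$, giving $\dE[T_k]\le C\,n\,k^{-3/2}(\mu e^{1-\mu})^k$ uniformly for all $1\le k\le n$, and hence $\sum_{k>K(n)}\dE[T_k]=O(n/\sqrt{K(n)})=o(n)$ directly. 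For $\mu\neq 1$ the $O(\log n)$ bound on tree component sizes is valid and either route works.
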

\begin{proof}
	Assume without loss of generality that $K(n) = o(\log n)$. Let $T_>$ be the number of vertices that are on tree components of size $\geq K(n)$. Taking $a_n = o(n)$ in Lemma \ref{lemma:bollobas_trees}, it remains to show that w.h.p., $T_> = o(n)$. This is done easily by bounding very roughly the first moment. Another classical result (see e.g. \cite{Janson00}, Theorem 5.4) is that with probability $1-o(1)$, all tree components are of size $O(\log n)$, which gives
	\begin{flalign*}
	\frac{\dE\left[T_>\right]}{n} &\leq o(1)+\sum_{k = K(n)}^{O(\log n)} \frac{1}{n} \cdot k \cdot \binom{n}{k} k^{k-2} \left(\frac{\mu}{n}\right)^{k-1} \left(1-\frac{\mu}{n}\right)^{k(n-k)+\binom{k}{2}-k+1}\\
	& \leq o(1)+ (1+o(1)) \sum_{k = K(n)}^{O(\log n)} \frac{e^k}{k} \mu^{k-1} e^{-k\mu},
	\end{flalign*} using $\binom{n}{k} \leq \left(\frac{en}{k}\right)^k$ together with Cayley's formula\footnote{Cayley's formula states that the number of trees on $k$ labeled vertices is $k^{k-2}$.} and the fact that for all indices $K(n) \leq k \leq O(\log n)$ in the sum, $k^2 \leq o(n)$ (uniformly). Now, the series in the right hand term has general terms which is $O\left(e^{-k(\mu - \log \mu +1)}\right)$, and since $\mu - \log \mu +1>0$ the series converges, which implies that $\dE\left[T_>\right/n]=o(1)$. The proof is concluded by Markov's inequality.
\end{proof}

Since in our model $\cG \land \cG'$ is an \ER graph of parameters $(n,\lambda s/n)$, the previous results ensures that all but a vanishing part of the $(1-c(\mu)) n$ vertices outside the giant component are on small (i.e. $\leq K(n)$) tree components of the intersection graph. For the rest of the paper, we will take $$K(n) = \lfloor \sqrt{\log n} \rfloor.$$

This first step suggests to build the permutations (relabelings) only by looking at $\cG \land \cG'$. Hence, we will first consider the random generation of the intersection graph, then create some permutations $\sigma_i$, and finally reveal the monochromatic edges.

The generating process is as follows: since almost all  $(1-c(\mu)) n$ vertices are on small trees in $\cG \land \cG'$, we can prove that each small tree up to isomorphism will have a number of occurrences in the intersection graph of order $n$ (this is claimed more precisely in Lemma \ref{lemme:controle_X}). Permuting iteratively these isomorphic trees, we may derange them quite a lot, and each time differently. 

In order to prove Theorem \ref{theorem:autos}, we use the \emph{probabilistic method}\footnote{The main interest of this widely used method (see \cite{ProbaMethod}) is to be non-constructive. Indeed, as detailed in the next Sections, explicitly giving the $p$ permutations considered in Theorem \ref{theorem:autos} is very cumbersome, because of the extra double edges that may appear (see Section \ref{section_extra_double_edges}).}: we give in the next section a simple detailed stochastic method to build $p$ permutation candidates, and we will next prove that these permutations satisfy conditions $(i)$ and $(ii)$ with positive probability, hence proving the desired existence.

\section{Building automorphisms of $\cG \land \cG'$ tree-wise}
Through all this section, we work conditionally on the intersection graph $\cG \land \cG'$ (that is the two-colored edges). 

\subsection{Mathematical formalization}
Recall that we fix $K:=K(n)= \lfloor \sqrt{\log n} \rfloor$. For all $k \in [K]$, we will denote by $\dT_k$ the set of \emph{unlabeled} trees of size $k$. $\dT_k$ can also be viewed as the set of equivalence classes of labeled trees of size $k$ for the isomorphism relation. Note that $\dT_k$ is finite and that we can roughly upper bound its size by the number of \emph{labeled} trees of size $k$ which equals $k^{k-2}$, by Cayley's formula\footnote{This upper bound is far from being optimal, but is enough for our use.}.

\begin{figure}[h]
	\centering
	\includegraphics[scale=1]{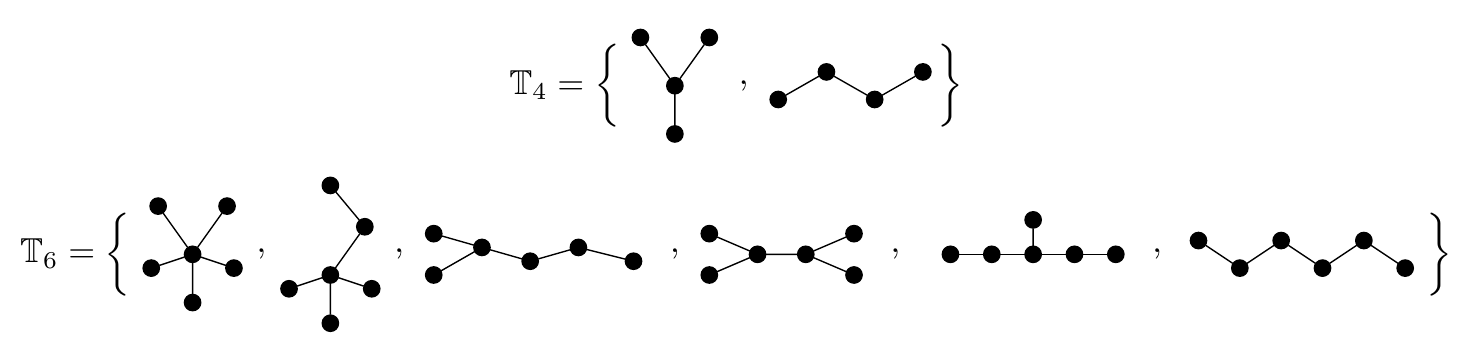}
	\caption{Explicit composition of $\dT_4$ (of size $2$) and $\dT_6$ (of size $6$).}
	\label{fig:img_ex_Tk}
\end{figure} 

For a given tree $\bT \in \dT_k$, we will denote by $X_\bT$ the number of distinct connected components of $\cG \land \cG'$ that are isomorphic to $\bT$, $H_\bT := \left\lbrace \cT_1, \cT_2, \ldots, \cT_{X_\bT} \right\rbrace$ the set of the corresponding labeled subgraphs of $\cG \land \cG'$, and $V(H_\bT)$ the set of vertices of $[n]$ that belong to one of the trees in $H_\bT$.

Our global recursion will be done on the finite set 
\begin{equation}\label{eq:def_T}
\dT := \bigcup_{k=1}^{K} \dT_k = \left\{\bT_1, \bT_2, \ldots, \bT_M\right\},
\end{equation} which we assume to have been ordered increasingly according to tree sizes, for convenience. The global permutation $\sigma$ is built block-wise by composing permutations $\sigma_\bT$ for $\bT \in \dT$ such that each $\sigma_\bT$ only acts on vertices of $H_\bT$.

More precisely, for a fixed $\bT \in \dT$, $\sigma_\bT$ will consists in permuting the vertices tree by tree, so $\sigma_\bT$ will be determined by a tree permutation $\Sigma_\bT$ of size $X_\bT$. Assume that for all trees $\cT_1, \ldots, \cT_{X_\bT}$ isomorphic to $\bT$ in $\cG \land \cG'$, we fix some isomorphisms $\psi_1, \ldots, \psi_{X_\bT}$ such that $\cT_i\underset{\psi_i}{\sim} \bT$ for all $i \in [X_\bT]$. More generally we will denote $\itree(u)$ the index of the tree that $u \in V(H_\bT)$ belongs to (when there is no ambiguity on $\bT$), and $u \simt u'$ when two vertices of $\cG \land \cG'$ are sent onto the same point of $\bT$ by these isomorphisms. Then, the natural definition of the node permutation $\sigma_{\bT}$ according to $\Sigma_\bT$ and these isomorphisms is given by

\begin{equation}\label{eq:Sigma_sigma}
\sigma_\bT : u \mapsto 
\left\{
\begin{array}{ll}
\psi_{\Sigma_\bT (\itree(u))}^{-1} \circ \psi_{\itree(u)} (u) \;\; (\in \cT_{\Sigma_\bT (\itree(u))}) & \mbox{if } u \in V(H_{\bT}), \\
u & \mbox{if } u \notin V(H_{\bT}).
\end{array}
\right.
\end{equation} Note that by definition, $V(H_{\bT})$ is stable by $\sigma_\bT$, and $\sigma_\bT$ fixes all nodes in $[n] \setminus V(H_{\bT})$. Recall that $M$ denotes the total size of $\dT$ as defined in \eqref{eq:def_T}. The recursive construction is as follows :

\begin{algorithm}[H]
	\label{algo:rec_construction}
	\caption{Recursive construction of $\sigma$}
	\SetAlgoLined
	Initialize $\sigma_0 \gets \id$\;
	
	\For{$i=1$ to $M$}{
		Consider $\bT = \bT_i$ and draw uniformly at random the tree permutation $\Sigma_{\bT} \in \cS_{X_{\bT}}$, independently from the past\;
		
		Consider $\sigma_{\bT}$ the node permutation associated with $\Sigma_{\bT}$ by \eqref{eq:Sigma_sigma}\;
		
		$\sigma_{i} \gets \sigma_{\bT} \circ \sigma_{i-1}$\;
	}
	\textbf{return} $\sigma = \sigma_M$
\end{algorithm} Note that at the end of the procedure, $\sigma$ fixes all points that are either on the giant component of the intersection graph, or on a component that is not a tree a size $\leq K(n)$. Figure \ref{fig:img_algo_ok} gives an example of this random recursive construction (for convenience, $\lambda s <1$, the true labels are in red, whereas blue labels enables to keep track of the relabeling recursively built on the blue graph). 

\begin{figure}[h]
	\centering
	\includegraphics[scale=0.83]{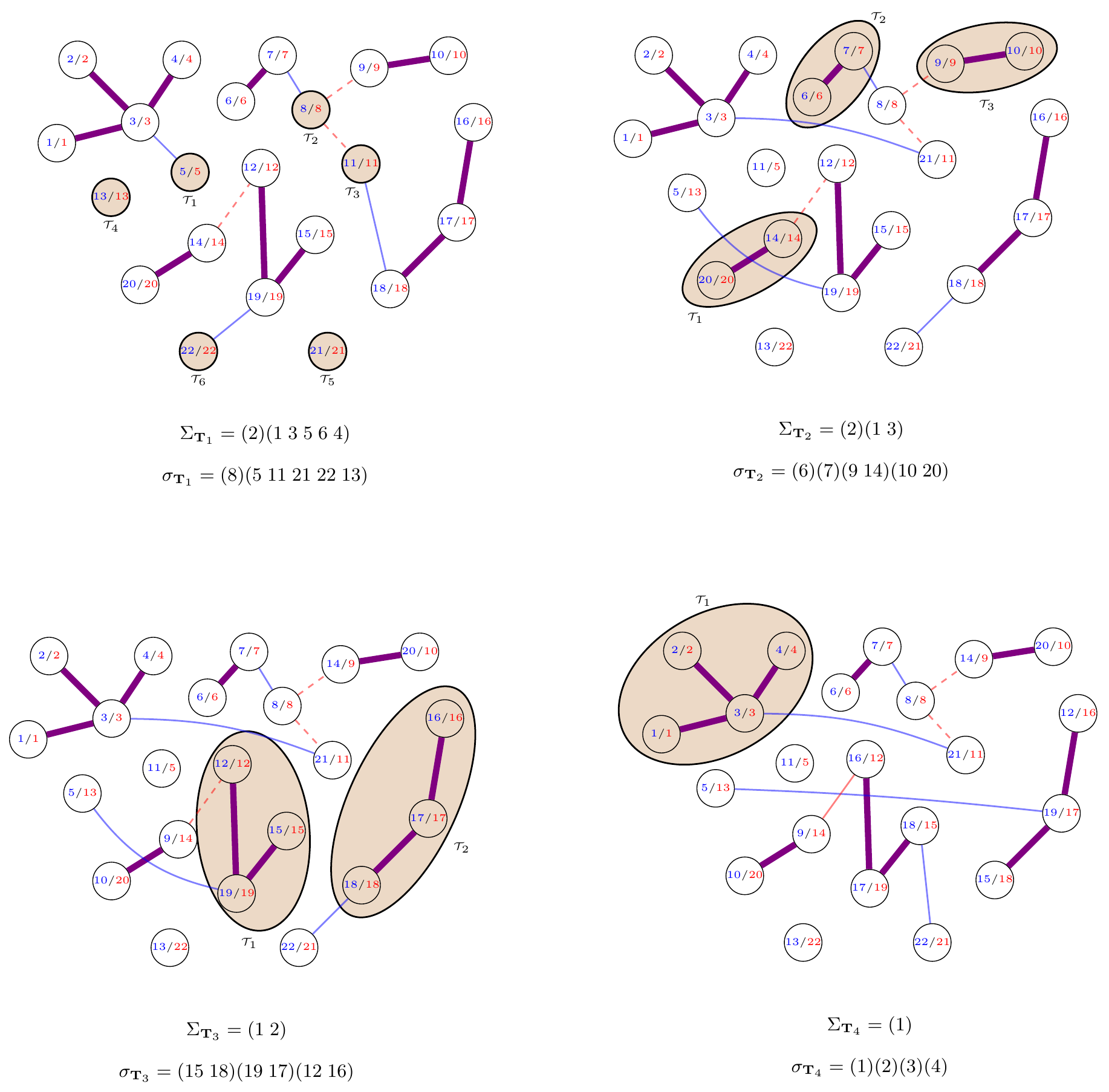}
	\caption{Example of recursive (tree-wise) generation of a permutation with Algorithm \ref{algo:rec_construction}.}
	\label{fig:img_algo_ok}
\end{figure} 

Through the analysis we will need the following control on $X_\bT$ for $\bT \in \dT$:

\begin{lemme}\label{lemme:controle_X}
Recall that $K(n)= \lfloor \sqrt{\log n} \rfloor$. For all $k \in [K(n)]$, define $
f(k) := \frac{(\lambda s)^{k-1}e^{-\lambda s k}}{k!}$. Then, with high probability (on the intersection graph),
\begin{equation}\label{eq:controle_X}
\forall k \in [K(n)], \, \forall \bT \in \dT_k, \, X_\bT \geq n (1-o(1)) f(k).\end{equation}
\end{lemme}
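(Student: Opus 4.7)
My plan is a second-moment method combined with a union bound over the finite family $\dT$. Writing $X_\bT = \sum_{S \subset [n],\, |S|=k} Y_S$, where $Y_S$ is the indicator that the subgraph of $\cG \land \cG'$ induced on $S$ is a copy of $\bT$ and moreover $S$ is isolated from the remaining vertices, the number of labeled trees on $S$ isomorphic to $\bT$ is $k!/|\mathrm{Aut}(\bT)|$, each requiring $k-1$ specific edges present inside $S$, $\binom{k}{2}-(k-1)$ specific non-edges inside $S$, and $k(n-k)$ non-edges between $S$ and $[n]\setminus S$. Since $k \leq K(n) = \lfloor\sqrt{\log n}\rfloor$ implies $k^2/n = o(1)$ uniformly, a direct computation yields
\[
\dE[X_\bT] \;=\; \frac{n\,(\lambda s)^{k-1} e^{-\lambda s k}}{|\mathrm{Aut}(\bT)|}\,(1+o(1)),
\]
uniformly in $\bT \in \dT$, and the trivial bound $|\mathrm{Aut}(\bT)| \leq k!$ already gives $\dE[X_\bT] \geq n f(k)(1+o(1))$.

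For concentration, I control $\var(X_\bT)$ by splitting the sum over pairs $(S, S')$. Pairs with $1 \leq |S \cap S'| \leq k-1$ are incompatible ($S$ and $S'$ cannot simultaneously be isolated components of $\cG \land \cG'$ while sharing vertices), so $Y_S Y_{S'} = 0$ and the covariance is nonpositive. Diagonal pairs $S = S'$ contribute at most $\dE[Y_S]$, summing to $\dE[X_\bT]$. For disjoint $S \cap S' = \emptyset$, the only edges jointly constrained by $Y_S$ and $Y_{S'}$ are the $k^2$ edges across $S \times S'$ (each required absent by both isolation events), which yields
\[
\frac{\dE[Y_S Y_{S'}]}{\dE[Y_S]\,\dE[Y_{S'}]} \;=\; \Bigl(1-\tfrac{\lambda s}{n}\Bigr)^{-k^2} \;=\; 1 + O(k^2/n).
\]
Summing these contributions gives $\var(X_\bT) \leq \dE[X_\bT] + O(k^2/n)\,\dE[X_\bT]^2$, and Chebyshev's inequality gives, for any $\delta_n \to 0$,
\[
\dP\bigl(X_\bT \leq (1-\delta_n)\dE[X_\bT]\bigr) \;\leq\; \frac{1}{\delta_n^2\,\dE[X_\bT]} \;+\; \frac{O(k^2/n)}{\delta_n^2}.
\]

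I then union-bound over $\dT$, whose cardinality, by Cayley's formula, satisfies $|\dT| \leq \sum_{k \leq K(n)} k^{k-2} \leq \exp\!\bigl(O(\sqrt{\log n}\,\log\log n)\bigr)$. A Stirling estimate shows $f(k) \geq \exp\!\bigl(-O(\sqrt{\log n}\,\log\log n)\bigr)$ uniformly for $k \leq K(n)$, hence $\dE[X_\bT] \geq n\,\exp\!\bigl(-O(\sqrt{\log n}\,\log\log n)\bigr)$. Choosing $\delta_n = 1/\log\log n$ (or any sufficiently slow vanishing sequence), both terms in the union-bound error tend to $0$, because $\sqrt{\log n}\,\log\log n = o(\log n)$; taking $(1-\delta_n)(1+o(1))$ as the final factor concludes.

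The delicate step is precisely this union bound: the number of tree shapes is superpolynomial in $n$, so the second-moment tail must decay faster than any polynomial. This succeeds because the Poisson weight $1/k!$ in $f(k)$ contributes an $e^{-k\log k}$ factor after Stirling, and the choice $K(n) = \sqrt{\log n}$ keeps $k \log k = O(\sqrt{\log n}\,\log\log n) = o(\log n)$; taking $K(n)$ markedly larger would break the argument.
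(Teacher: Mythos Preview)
Your proof is correct and follows essentially the same route as the paper: compute $\dE[X_\bT]\sim n(\lambda s)^{k-1}e^{-\lambda s k}/|\mathrm{Aut}(\bT)|$, bound the variance via the second moment, apply Chebyshev, and union-bound over $\dT$ using $|\dT|\le K^{K-1}$ and $f(K)\ge \exp(-O(\sqrt{\log n}\log\log n))$. The only cosmetic difference is that the paper computes $\dE[X_\bT(X_\bT-1)]$ directly (obtaining $\Var(X_\bT)\sim\dE[X_\bT]$), whereas you split the covariance sum into diagonal, overlapping, and disjoint pairs; your observation that overlapping pairs contribute nonpositively is a clean way to see the same estimate, and your choice $\delta_n=1/\log\log n$ works just as well as the paper's $\eps=n^{-1/4}$.
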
 The proof of this result is deferred to Appendix \ref{app:proof_control_X}.

\begin{remarque}\label{remarque:f(K)}
Note that since $\lambda s e^{-\lambda s}<1$, $k \mapsto f(k)$ is decreasing with $k$. Moreover, for $K(n) \leq \sqrt{\log n}$, we have that for any $t>0$, $$f(K(n)) \geq \exp\left(- C\sqrt{\log n} \log \log n \right) \gg n^{-t}.$$  
\end{remarque}

\subsection{Ensuring that the permutations are 'far apart'} 
We check in this section that Algorithm \ref{algo:rec_construction} generates permutations that will verify condition $(ii)$ of Theorem \ref{theorem:autos}, w.h.p. Let $\sigma_{1}, \ldots, \sigma_{p}$ be generated independently with Algorithm \ref{algo:rec_construction}. We then have the following results:

\begin{lemme}\label{lemma:controle_overlap_ij} With high probability, for all $i \neq j \in [p]$,
$$ \Fix(\sigma_i,\sigma_j) = c(\lambda s) n + o(n).$$
\end{lemme}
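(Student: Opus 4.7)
Work conditional on the intersection graph $\cG\land\cG'$ on a high-probability event $E$ where: (a) the giant component has $c(\lambda s) n + o(n)$ vertices (classical result), (b) the number of vertices on small tree components (size $\leq K(n)$) is $(1-c(\lambda s)) n - o(n)$ (Lemma \ref{lemma:small_trees}), and (c) the bound \eqref{eq:controle_X} of Lemma \ref{lemme:controle_X} holds. On $E$, the number of vertices not on a small tree component equals $c(\lambda s) n + o(n)$, and every such vertex is fixed by each $\sigma_\ell$ produced by Algorithm \ref{algo:rec_construction}. This supplies the ``main contribution'' $c(\lambda s) n + o(n)$ to $\Fix(\sigma_i,\sigma_j)$ for any pair.

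\textbf{Small-tree contribution.} For each $\bT\in\dT$ of size $k$, by \eqref{eq:Sigma_sigma}, a vertex $u\in V(H_\bT)$ with $\itree(u)=m$ is mapped by $\sigma_\ell$ to $\psi^{-1}_{\Sigma^{(\ell)}_\bT(m)}\circ\psi_m(u)$, which lies in the tree $\cT_{\Sigma^{(\ell)}_\bT(m)}$. Since distinct trees in $H_\bT$ have disjoint vertex sets, $\sigma_i(u)=\sigma_j(u)$ if and only if $\Sigma^{(i)}_\bT(m)=\Sigma^{(j)}_\bT(m)$, in which case the isomorphism formula forces equality. Hence
\begin{equation*}
\Fix(\sigma_i,\sigma_j) \;=\; c(\lambda s) n + o(n) \;+\; \sum_{\bT\in\dT} k(\bT)\,\Fix\!\bigl(\Sigma^{(i)}_\bT,\Sigma^{(j)}_\bT\bigr),
\end{equation*}
where $k(\bT)$ denotes the size of $\bT$.

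\textbf{Bounding the small-tree sum.} Since, conditionally on $\cG\land\cG'$, the tree permutations $\Sigma^{(i)}_\bT$ and $\Sigma^{(j)}_\bT$ are independent uniform elements of $\cS_{X_\bT}$, the composition $\Sigma^{(j)}_\bT\circ(\Sigma^{(i)}_\bT)^{-1}$ is itself uniform on $\cS_{X_\bT}$. Thus $\Fix(\Sigma^{(i)}_\bT,\Sigma^{(j)}_\bT)$ has the law of the number of fixed points of a uniform permutation, whose mean is $1$ (or $0$ if $X_\bT=0$). Therefore
\begin{equation*}
\dE\!\left[\sum_{\bT\in\dT} k(\bT)\,\Fix\!\bigl(\Sigma^{(i)}_\bT,\Sigma^{(j)}_\bT\bigr)\,\middle|\,\cG\land\cG'\right] \;\leq\; \sum_{k=1}^{K(n)} k\cdot|\dT_k| \;\leq\; \sum_{k=1}^{K(n)} k^{k-1}
\end{equation*}
using Cayley's formula. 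With $K(n)=\lfloor\sqrt{\log n}\rfloor$, this is bounded by $K(n)^{K(n)+1}=\exp\bigl(O(\sqrt{\log n}\,\log\log n)\bigr) = n^{o(1)}$.

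\textbf{Conclusion.} By Markov's inequality (conditional on the graph), for any $\eps>0$ the small-tree sum is $\leq \eps n$ with conditional probability $1-o(1)$, hence also unconditionally $1-o(1)$ once one intersects with $E$. A union bound over the $\binom{p}{2}$ pairs, where $p$ is a fixed constant, yields simultaneously for all $i\neq j\in[p]$ that the small-tree contribution is $o(n)$. Combined with the first paragraph, this gives $\Fix(\sigma_i,\sigma_j)=c(\lambda s)n+o(n)$ with high probability, as required. The argument is essentially routine once the per-tree decomposition is identified; the only delicate point is the interplay between the two sources of randomness (graph and tree permutations), handled by working conditionally on $\cG\land\cG'$.
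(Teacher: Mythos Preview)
Your proof is correct and follows the same decomposition as the paper: split $\Fix(\sigma_i,\sigma_j)$ into the contribution from vertices outside small tree components (equal to $c(\lambda s)n+o(n)$ on the good event) plus $\sum_{\bT\in\dT} k(\bT)\,\Fix(\Sigma^{(i)}_\bT,\Sigma^{(j)}_\bT)$, and then show the latter sum is $o(n)$.

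The only methodological difference is in bounding this sum. The paper uses a Laplace transform bound $\dE[e^{tX}]\le \exp(e^t)$ for $X=\Fix(\Sigma^{(i)}_\bT,\Sigma^{(j)}_\bT)$, multiplies over $\bT$ by independence, and applies a Chernoff argument to obtain $\Fix'(\sigma_i,\sigma_j)\le n^{\alpha}$ for any $\alpha\in(0,1)$. You instead use only the first moment: since $\dE[\Fix(\Sigma^{(i)}_\bT,\Sigma^{(j)}_\bT)]\le 1$, the conditional expectation of the whole sum is at most $\sum_{k\le K(n)} k\,|\dT_k|\le K(n)^{K(n)+1}=n^{o(1)}$, and Markov's inequality plus a union bound over the $\binom{p}{2}$ pairs finishes. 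Your route is strictly more elementary and entirely sufficient for the stated lemma, which only requires $o(n)$; the paper's Chernoff argument yields the sharper $n^{\alpha}$ bound, which is not actually needed anywhere else in the paper.
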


This lemma is proved in Appendix \ref{app:proof_overlap}. In the sequel we will denote by $V_\infty$ the set of vertices that are on the giant component of $\cG \land \cG'$ (if there is one), and by $V_>$ the vertices of $[n] \setminus V_\infty$ that are \emph{not} on tree components of size $\leq K(n)$. Finally we set $V_{\infty,>} := V_\infty \cup V_>$. Define
\begin{equation}\label{eq:def_Sin_Sout}
    \cS_{in} := \binom{[n]\setminus V_{\infty,>} }{2}, \quad \cS_{out} := \binom{[n]}{2} \setminus \left(\binom{V_{\infty,>}}{2} \cap \binom{[n]\setminus V_{\infty,>}}{2} \right), \quad \cS := \cS_{in} \cup \cS_{out}.
\end{equation}$\cS_{in}$ is the set of edges that have both endpoints outside $V_{\infty,>}$, whereas edges of $\cS_{out}$ have exactly one endpoint in $V_{\infty,>}$. We say that an edge $(u,v) \in \binom{[n]}{2}$ is a \emph{common fixed edge} of permutations $\sigma_{1}, \ldots, \sigma_r$ if 
$$ \left\lbrace \sigma_1(u), \sigma_1(v) \right\rbrace = \ldots = \left\lbrace \sigma_r(u), \sigma_r(v) \right\rbrace. $$ For all subset of edges $\cW \subseteq \binom{[n]}{2}$, we define
\begin{equation}
F(\cW,\sigma_{1}, \ldots, \sigma_r) := \sum_{e \in \cW} \mathbf{1}_{e \mbox{\footnotesize{ is a common fixed edge of} } \sigma_{1}, \ldots, \sigma_r}.
\end{equation} 

We now state a result -- which proof is deferred to \ref{app:proof_control_F} -- that will be useful in next section.

\begin{lemme}\label{lemma:controle_F}
With high probability, we have, for any $t>0$,
\begin{itemize}
    \item for any $i_1 \neq i_2 \in [p]$,
    \begin{equation}\label{eq:lemma:controle_F2}
	F(\cS,\sigma_{i_1},\sigma_{i_2}) \leq n^{1+t},
	\end{equation}
	
    \item for any $i_1, i_2, i_3 \in [p]$ pairwise distinct,
    \begin{equation}\label{eq:lemma:controle_F3}
	F(\cS,\sigma_{i_1},\sigma_{i_2},\sigma_{i_3}) \leq n^t,
	\end{equation}
	
	\item for any $r \geq 4$, $i_1, \ldots, i_r \in [p]$ pairwise distinct,
	\begin{equation}\label{eq:lemma:controle_F4}
	F(\cS,\sigma_{i_1},\ldots,\sigma_{i_r}) =0.
	\end{equation}
\end{itemize}
\end{lemme}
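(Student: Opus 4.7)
The plan is to bound $\dE[F(\cS,\sigma_{i_1},\ldots,\sigma_{i_r})]$ for each fixed tuple of indices, apply Markov's inequality, and finally take a union bound over the finitely many tuples of distinct indices with $r \leq p$. I work conditionally on a realization of $\cG \land \cG'$ on which Lemma~\ref{lemme:controle_X} holds. Combining Remark~\ref{remarque:f(K)} and the crude bound $|\dT_k| \leq k^{k-2}$ from Cayley's formula, this gives $X_\bT \geq n^{1-o(1)}$ uniformly in $\bT \in \dT$ and $|\dT| \leq K(n)^{K(n)} = n^{o(1)}$. Conditionally on $\cG \land \cG'$, the randomness of $\sigma_1,\ldots,\sigma_p$ is carried entirely by the independent uniform tree permutations $\Sigma_\bT^{(k)} \in \cS_{X_\bT}$ for $k \in [p]$, $\bT \in \dT$.

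For each edge $e = \{u,v\} \in \cS$, I compute $\dP(e\text{ is a common fixed edge of }\sigma_{i_1},\ldots,\sigma_{i_r})$ by cases. When $e \in \cS_{out}$ with $u \in V_{\infty,>}$ and $v$ on a tree of type $\bT$, the identity $\sigma_{i_k}(u)=u$ reduces the event to $\Sigma_\bT^{(i_1)}(\itree(v)) = \cdots = \Sigma_\bT^{(i_r)}(\itree(v))$, of probability $X_\bT^{-(r-1)}$. When $e \in \cS_{in}$ with $u,v$ in the same component of type $\bT$, the same reduction applies and the probability is again $X_\bT^{-(r-1)}$. When $e \in \cS_{in}$ with $u,v$ on distinct components of types $\bT_u,\bT_v$, being common fixed forces $\sigma_{i_k}(u)$ and $\sigma_{i_k}(v)$ to each land on a fixed tree across all $k$ (up to a ``swap'' only possible when $\bT_u = \bT_v$), giving probability at most $2^{r-1}(X_{\bT_u} X_{\bT_v})^{-(r-1)}$.

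Summing by linearity of expectation and grouping by tree types (using $|V(H_\bT)| = k X_\bT$) yields
\begin{equation*}
\dE[F(\cS_{out},\sigma_{i_1},\ldots,\sigma_{i_r})] \leq n \sum_{\bT \in \dT} \frac{k}{X_\bT^{r-2}},
\end{equation*}
\begin{equation*}
\dE[F(\cS_{in},\sigma_{i_1},\ldots,\sigma_{i_r})] \leq \sum_{\bT \in \dT} \frac{k^2}{X_\bT^{r-2}} + 2^{r-1} \sum_{\bT_u,\bT_v \in \dT} \frac{k_u k_v}{(X_{\bT_u} X_{\bT_v})^{r-2}}.
\end{equation*}
Plugging in $X_\bT \geq n^{1-o(1)}$, $|\dT| \leq n^{o(1)}$, and $k \leq K(n) = n^{o(1)}$ gives expectation bounds of $n^{1+o(1)}$, $n^{o(1)}$ and $n^{-1+o(1)}$ for $r=2$, $r=3$ and $r \geq 4$ respectively. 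Markov's inequality then delivers, for any fixed $t > 0$ and for each fixed tuple, the three asserted bounds with probability $1-o(1)$, and a union bound over the finitely many relevant tuples (at most $p^{p+1}$ of them) closes the argument.

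The delicate point is really the bookkeeping in the different-component same-type subcase, where the ``swap'' configurations must be accounted for; luckily they only contribute a constant factor $2^{r-1} \leq 2^p$, harmless because $p$ is fixed. The essential quantitative input is the uniform lower bound $X_\bT \geq n^{1-o(1)}$ valid for \emph{every} $\bT \in \dT$, which is precisely why $K(n)$ is chosen to grow only as $\sqrt{\log n}$ in Remark~\ref{remarque:f(K)}: any faster growth would break the uniform lower bound on $X_\bT$ and spoil the expectation estimates above.
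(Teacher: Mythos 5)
Your proposal is correct and follows essentially the same route as the paper: a first-moment bound conditional on Lemma~\ref{lemme:controle_X}, a case analysis of edges of $\cS_{out}$ versus intra-component and inter-component edges of $\cS_{in}$ with the probability of being a common fixed edge computed from the independent uniform tree permutations (including the factor $2^{r-1}$ for swaps, which the paper handles by splitting into the $u\simt v$ and $u\notsimt v$ subcases), followed by Markov's inequality and a union bound over tuples. The dominant $n^{3-r+o(1)}$ contribution from $\cS_{out}$ and the resulting thresholds for $r=2$, $r=3$ and $r\geq 4$ match the paper's Steps 1--3 exactly.
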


\subsection{Emergence of extra double edges}\label{section_extra_double_edges} 
In the example of Figure \ref{fig:img_algo_ok}, we can see that the number of two-colored edges in the relabeled union graph $\cG^{\sigma_i} \lor \cG'$ is constant through time. This property is fundamental for point $(i)$ of Theorem \ref{theorem:autos}. However, depending on the random $\sigma_{\bT_i}$ drawn through the process -- we recall that they are drawn independently from the monochromatic edges, that are not revealed yet -- we may see extra two-colored edges appear (extra double edges hereafter). Figure \ref{fig:img_algo_pb} shows a case in which there is an emergence of an extra double edge in the process.

\begin{figure}[H]
	\centering
	\includegraphics[scale=0.85]{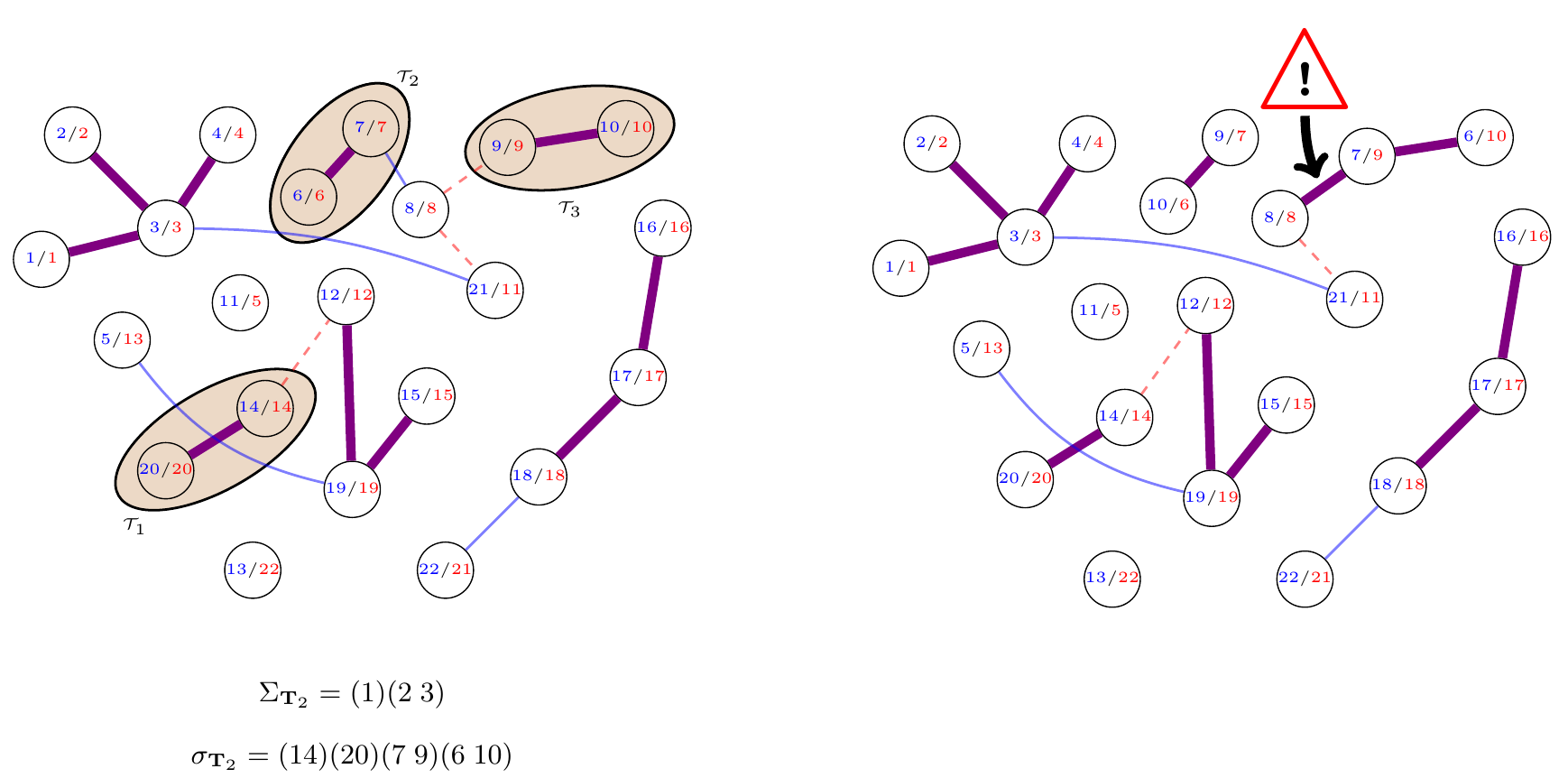}
	\caption{Example of the emergence of an extra double edge in Algorithm \ref{algo:rec_construction}.}
	\label{fig:img_algo_pb}
\end{figure}
Note that the number of two-coloured edges can only be greater or equal to $e(\cG \land \cG')$ through this process, since by definition we are preserving edges of the intersection graph.

The last part of our work is to prove that there is a positive probability that applying independently Alg.\ref{algo:rec_construction} $p$ times gives $p$ permutations that do not present extra double edges, before using the probabilistic method. This step will require a Poisson approximation, described hereafter.

\section{Poisson approximation to avoid extra double edges, proof of Theorem \ref{theorem:autos}.}

In this section we introduce $n'$ to be the number of vertices that the permutations actually act on:
\begin{equation}
n' := \card{[n] \setminus V_{\infty,>}} \sim (1-c(\lambda s)) n \mbox{ w.h.p.}
\end{equation} Then, we assume that we fix a set $\left\{\sigma_{i}\right\}_{i \in [p]}$ of $p$ permutations of $[n']$, verifying : 
\begin{equation}\tag{H1}\label{H1}
\mbox{for all } t>0, \mbox{for all } m\neq m' \in [p], F(\cS,\sigma_{m},\sigma_{m'}) \leq n^{1+t}.
\end{equation} 
\begin{equation}\tag{H2}\label{H2}
\mbox{for all } t>0, \mbox{for all } m_1, m_2, m_3 \in [p] \mbox{ pairwise distinct }, F(\cS,\sigma_{m_1},\sigma_{m_2},\sigma_{m_3}) \leq n^{t}.
\end{equation} 
\begin{equation}\tag{H3}\label{H3}
\mbox{There are no common fixed edge of any $r$-tuple in $\left\{\sigma_{i}\right\}_{i \in [p]}$.}
\end{equation}

We will work under the event $\cE_\cS$ on which $n' \sim (1-c(\lambda s)) n$ and $\card{\cS} \sim \binom{n'}{2} \sim n'^2/2 = (1-c(\lambda s))^2 n^2/2 $. It is easy (see e.g. \cite{Bollobas2001}) to show that $\cE_\cS$ is satisfied w.h.p. As explained before, some extra double edges (e.d.e. hereafter) may appear when revealing the non double edges of $\cS$ (that is, blue and red edges that are not between vertices of $V_{\infty,>}$). Note that for every edge we have
\begin{flalign*}
\dP\left(u \noir v \, | \, (u,v) \notin E(\cG \land \cG') \right) &= \dP\left(u \rouge v \, | \, (u,v) \notin E(\cG \land \cG') \right)\\
&= \frac{\dP\left(u \rouge v, (u,v) \notin E(\cG \land \cG') \right)}{\dP\left((u,v) \notin E(\cG \land \cG') \right)} \\ &= \frac{\lambda(1-s)/n}{1 - \lambda s/n} \sim \frac{\lambda(1-s)}{n}.
\end{flalign*}
For any permutation $\sigma$, define the number of created e.d.e. by the relabeling of $\cG$ by $\sigma$ as follows:
\begin{equation}\label{def:delta_sigma}
\Delta(\sigma) := \sum_{\left\{u,v\right\} \in \cS} \mathbf{1}_{u \nnoir v} \mathbf{1}_{\sigma(u) \rrouge \sigma(v)}.
\end{equation}
We now present the key result for our analysis, with the notation $n^{\underline{k}}$ for the \emph{falling factorial} $$n^{\underline{k}}:= n(n-1)\cdots(n-k+1).$$

\begin{theorem}[Asymptotic Poisson behavior of $\left\{\Delta(\sigma_{i})\right\}_{i \in [p]}$]\label{theorem:poisson}
Assume that $\left\{\sigma_{i}\right\}_{i \in [p]}$ verify \eqref{H1}, \eqref{H2} and \eqref{H3}. Then, for all $\ell_1, \ell_2, \ldots, \ell_p \geq 0$,
\begin{equation}\label{eq:theorem:poisson}
\dE \left[\Delta(\sigma_1)^{\underline{\ell_1}}\Delta(\sigma_2)^{\underline{\ell_2}} \cdots \Delta(\sigma_p)^{\underline{\ell_p}} \, \big| \, \cG \land \cG', \cE_\cS\right] \underset{n \to \infty}{\longrightarrow} \left(\frac{\lambda^2(1-s)^2 (1-c(\lambda s))^2}{2}\right)^{\ell_1 + \ell_2 + \ldots + \ell_p}.
\end{equation} In other words, conditionally to graph $\cG \land \cG'$ and event $\cE_\cS$, the random variables $\left\{\Delta(\sigma_{i})\right\}_{i \in [p]}$ are asymptotically distributed as independent Poisson variables of parameter $\frac{\lambda^2(1-s)^2 (1-c(\lambda s))^2}{2}$.
\end{theorem}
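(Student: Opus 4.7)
My plan is to prove the theorem by the method of falling factorial moments. A vector $(Y_1, \ldots, Y_p)$ of independent Poisson variables with common mean $\mu := \frac{\lambda^2(1-s)^2(1-c(\lambda s))^2}{2}$ satisfies $\dE[\prod_i Y_i^{\underline{\ell_i}}] = \mu^{\sum_i \ell_i}$, so it is enough to prove the asserted joint-moment convergence for $(\Delta(\sigma_i))_{i \in [p]}$. Expanding each $\Delta(\sigma_i)^{\underline{\ell_i}}$ as a sum over ordered $\ell_i$-tuples of distinct pairs in $\cS$, the joint factorial moment becomes a sum over configurations $\mathbf{e} = (e_j^{(i)})_{i,j}$ of the conditional probability that all $2\ell$ indicators $\ind{u_j^{(i)} \noir v_j^{(i)}}$ and $\ind{\sigma_i(u_j^{(i)}) \rouge \sigma_i(v_j^{(i)})}$ equal $1$, where $\ell := \sum_i \ell_i$. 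Each configuration carries $2\ell$ \emph{marks} in $\binom{[n]}{2}$: a blue mark at $e_j^{(i)}$ and a red mark at $\sigma_i(e_j^{(i)})$. Conditionally on $\cG \land \cG'$, each non-intersection pair is independently blue-monochromatic or red-monochromatic with probability $p_b \sim \lambda(1-s)/n$ each, and these two events are mutually exclusive at any given pair.

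The main term comes from \emph{generic} configurations, i.e. those whose $2\ell$ marks land on $2\ell$ distinct pairs, none in $E(\cG \land \cG')$. For such configurations the $2\ell$ indicators are independent Bernoulli($p_b$) variables, so the probability is exactly $p_b^{2\ell}$. A careful combinatorial count --- using $|\cS_{in}| \sim (1-c(\lambda s))^2 n^2/2$ on $\cE_\cS$, after accounting for the $O(n)$ intersection edges in $\cS$ and for fixed-point cancellations of the $\sigma_i$ on $\cS_{out}$-type pairs --- yields $\prod_i |\cS_{in}|^{\ell_i}(1+o(1))$ generic configurations. Their contribution therefore converges to $\prod_i (|\cS_{in}|\, p_b^2)^{\ell_i} = \mu^{\ell}$.

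The crux, and my main technical obstacle, is to show that all non-generic configurations contribute $o(1)$ in total. I classify them by their \emph{coincidence pattern}, namely the partition of the $2\ell$ marks into equivalence classes of marks landing on the same pair of $\binom{[n]}{2}$. Several observations reduce the analysis: (a) a blue--red coincidence at a non-intersection pair gives zero contribution by mutual exclusivity of blue and red monochromatic edges; (b) marks landing on intersection pairs are negligible, since $|E(\cG \land \cG') \cap \cS| = O(n)$; (c) each additional coincidence class forces one pair (losing $|\cS|^{-1} = \Theta(n^{-2})$ in the tuple count) while collapsing only one Bernoulli factor ($p_b^{-1} = \Theta(n)$), so isolated coincidences contribute $\Theta(n^{-1}) = o(1)$ each.

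The truly delicate case is a \emph{bundled} coincidence, in which positions $(i_1, j_1), \ldots, (i_r, j_r)$ from distinct blocks share both the same blue mark and the same red mark at a common pair $e$: this forces $\sigma_{i_1}(e) = \cdots = \sigma_{i_r}(e)$, i.e. $e$ must be a common fixed edge of $\sigma_{i_1}, \ldots, \sigma_{i_r}$. Here hypotheses \eqref{H1}, \eqref{H2}, \eqref{H3} become essential: the number of such $e$'s is bounded by $n^{1+t}, n^t, 0$ for $r = 2, 3, \geq 4$ respectively (any $t > 0$). Direct counting yields a bundled contribution of order $O\!\left(F \cdot |\cS|^{\ell-r} \cdot p_b^{2\ell-2r+2}\right) = O(n^{t-(r-1)}) = o(1)$ for any such pattern. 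Summing over the finitely many coincidence patterns concludes the proof.
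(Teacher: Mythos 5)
Your proposal is correct and follows essentially the same route as the paper: expand the joint falling-factorial moment over ordered tuples of pairs in $\cS$, extract the main term $\mu^{\ell_1+\cdots+\ell_p}$ from configurations with all $2\ell$ marks distinct, and control the remaining configurations by classifying coincidences — with blue--red coincidences killed by mutual exclusivity, isolated blue--blue or red--red coincidences each costing a factor $\Theta(n^{-1})$, and simultaneous blue-and-red coincidences forcing common fixed edges, which is exactly where \eqref{H1}, \eqref{H2}, \eqref{H3} enter (the paper's type-$(iii)$ constraints in its dependency-graph bookkeeping). The only differences are organizational: the paper splits the argument into a separate lower and upper bound rather than a single main-term-plus-error decomposition, but the counting and the exponents match yours.
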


The proof of Theorem \ref{theorem:poisson}, based on a fine control of terms of unusually high contribution, is deferred to Appendix \ref{app:proof_theorem_poisson}.
\subsection{Proof of Theorem \ref{theorem:autos}}
\begin{proof}
The proof is quite straightforward now. Fixing $p>0$, Lemma \ref{lemma:controle_F} gives that \eqref{H1}, \eqref{H2} and \eqref{H3} are verified w.h.p. by some $\sigma_{1}, \ldots, \sigma_{p}$ generated independently with Algorithm \ref{algo:rec_construction}. Then, the probability (on the remaining monochrome edges) that the $p$ permutations given satisfy conditions $(i)$ and $(ii)$ of Theorem \ref{theorem:autos} is equivalent to
\begin{equation}
(1-o(1)) \times \dP\left(\Poi\left(\frac{\lambda^2(1-s)^2}{2}\right)=0 \right)^p = (1-o(1)) \exp\left(- p \frac{\lambda^2(1-s)^2}{2}\right) >0,
\end{equation} which gives the existence with high probability of a set a permutations of size $p$ satisfying conditions $(i)$ and $(ii)$ of Theorem \ref{theorem:autos}.
\end{proof}

\newpage

\acks{This work was partially supported by the French government under management of Agence Nationale de la Recherche as part of the “Investissements d’avenir” program, reference ANR19-P3IA-0001 (PRAIRIE 3IA Institute).}

\bibliographystyle{alpha}
\bibliography{biblio_GA}

\newpage
\appendix
\section{Proof of Theorem \ref{theorem:poisson}}\label{app:proof_theorem_poisson}
\begin{proof}[Proof of Theorem \ref{theorem:poisson}]
Let $\ell_1, \ell_2, \ldots, \ell_p $ be non negative integers. Recall that conditioned to $\cG \land \cG'$, each edge of $\cS$ is independently blue (resp. red) with probability 
\begin{equation*}
q = q(\lambda,s,n) := \frac{\lambda(1-s)}{n - \lambda s}.
\end{equation*}
Now, let us explain why convergence \eqref{eq:theorem:poisson} holds. First recall that for a given $\ell \geq 0$, $\dE\left[\Delta(\sigma)^{\underline{\ell}}\right]$ is nothing else but the expected number of (ordered) $p-$tuples of edges $\left\{u,v\right\} \in \cS$ such that $\mathbf{1}_{u \nnoir v} \mathbf{1}_{\sigma(u) \rrouge \sigma(v)} = 1$. Using the notation ${\sum}^*$ for summation of ordered tuples of edges in $\cS$ as well as linearity of expectation, we get:

\begin{multline}\label{eq:expr_multi_mom}
\dE \left[\Delta(\sigma_1)^{\underline{\ell_1}}\Delta(\sigma_2)^{\underline{\ell_2}} \cdots \Delta(\sigma_p)^{\underline{\ell_p}}\right] = \\
\sideset{}{^*}\sum_{\substack{\lbrace u^{(1)}_{1},v^{(1)}_{1}\rbrace, \\ \lbrace u^{(1)}_{2},v^{(1)}_{2}\rbrace, \\\ldots, \\ \lbrace u^{(1)}_{\ell_1},v^{(1)}_{\ell_1}\rbrace} } \; \;
\sideset{}{^*}\sum_{\substack{\lbrace u^{(2)}_{1},v^{(2)}_{1}\rbrace, \\ \lbrace u^{(2)}_{2},v^{(2)}_{2}\rbrace, \\\ldots, \\ \lbrace u^{(2)}_{\ell_2},v^{(2)}_{\ell_2}\rbrace} }
\ldots
\sideset{}{^*}\sum_{\substack{\lbrace u^{(p)}_{1},v^{(p)}_{1}\rbrace, \\ \lbrace u^{(p)}_{2},v^{(p)}_{2}\rbrace, \\\ldots, \\ \lbrace u^{(p)}_{\ell_p},v^{(p)}_{\ell_p}\rbrace} }
\dE \left[\prod_{m=1}^{p} \prod_{j=1}^{\ell_m} \mathbf{1}_{u^{(m)}_{j} \nnoir v^{(m)}_{j}} \mathbf{1}_{\sigma_m(u^{(m)}_{j}) \rrouge \sigma_m(v^{(m)}_{j})}\right] 
\end{multline}

First observe that the total number of terms $N$ in the previous sum is 
$$N := \card{\cS}^{\underline{\ell_1}} \times \card{\cS}^{\underline{\ell_2}} \times \cdots \card{\cS}^{\underline{\ell_p}} \sim \left(\frac{(1-c(\lambda s))^2 n^2 }{2}\right)^{\ell_1 + \ldots + \ell_p},$$ since $\card{\cS} \sim \frac{ (1-c(\lambda s))^2 n^2}{2}$ on event $\cE_\cS$.\\

\textbf{Lower bound:} Observe that the $N$ terms in the sum of eq. \eqref{eq:expr_multi_mom} are made in general of $2(\ell_1 + \ldots + \ell_p)$ indicator variables, not necessarily distinct. For most of the terms however, all involved edges are distinct, thus independent, and their contribution to the sum is $q^{2(\ell_1 + \ldots + \ell_p)}$.

Whenever a pair of blue (resp. red) indicators are equal, at least one term may be canceled, so the contribution to the expectation is higher than $q^{2(\ell_1 + \ldots + \ell_p)}$.

Whenever a pair of edges that appear in a blue/red pair of indicators are equal, the product of the indicators is necessarily $0$ (indeed, an edge in $\cS$ cannot be two-colored). These terms, where at least one equality of the form $\left\{u_j^{(m)}, v_j^{(m)}\right\}  = \left\{\sigma_{m'}(u^{(m')}_{j'}) , \sigma_{m'}(v^{(m')}_{j'})\right\} $ occurs, cover the case where the contribution is strictly less that $q^{2(\ell_1 + \ldots + \ell_p)}$ (it is $0$). There are at most 
$$\binom{\ell_1+\ldots+\ell_p}{2} \left(\frac{n^2}{2}\right)^{\ell_1+\ldots+\ell_p-1}$$ such terms. Thus

\begin{flalign*}
\dE \left[\Delta(\sigma_1)^{\underline{\ell_1}}\Delta(\sigma_2)^{\underline{\ell_2}} \cdots \Delta(\sigma_p)^{\underline{\ell_p}}\right] & \geq \left(N - \binom{\ell_1+\ldots+\ell_p}{2} \left(\frac{n^2}{2}\right)^{\ell_1+\ldots+\ell_p-1}\right) \times q^{2(\ell_1 + \ldots + \ell_p)}\\
& \sim \left(\frac{(1-c(\lambda s))^2 n^2}{2}\right)^{\ell_1 + \ldots \ell_p} \times \left(\frac{\lambda (1-s)}{n}\right)^{2(\ell_1 + \ldots + \ell_p)} \\
&\underset{n \to \infty}{\longrightarrow} \left(\frac{\lambda^2(1-s)^2 (1-c(\lambda s))^2}{2}\right)^{\ell_1 + \ell_2 + \ldots + \ell_p}.
\end{flalign*}

\textbf{Upper bound:} The terms that we now want to study are the terms for which the contribution is greater than $q^{2(\ell_1 + \ldots + \ell_p)}$. Looking closely at the general product in \eqref{eq:expr_multi_mom}, an unusual high contribution is the consequence of three possible type of constraints:
\begin{itemize}
\item[$(i)$] constraints of the form  $\left\{u_j^{(m)}, v_j^{(m)}\right\}  = \left\{u^{(m')}_{j'} , v^{(m')}_{j'}\right\} $: note that since the sums are made of ordered tuples, this equality may happen only for pairs such that $m \neq m'$. Moreover, transitivity of equality implies that a constraint implying some $\left\{u_j^{(m)}, v_j^{(m)}\right\}$ may happen at most once for each $m' \in [p], m' \neq m$ (otherwise we would have a relationship of the form $\left\{u_{j'}^{(m')}, v_{j'}^{(m')}\right\} = \left\{u_{k'}^{(m')}, v_{k'}^{(m')}\right\}$, which is impossible).

\item[$(ii)$] constraints of the form  $\left\{\sigma_{m}(u^{(m)}_{j}) , \sigma_{m}(v^{(m)}_{j})\right\}  = \left\{\sigma_{m'}(u^{(m')}_{j'}) , \sigma_{m'}(v^{(m')}_{j'})\right\} $. For the same reasons as in case $(i)$, a constraint implying some $\left\{\sigma_{m}(u^{(m)}_{j}) , \sigma_{m}(v^{(m)}_{j})\right\}$ may happen at most once for each $m' \in [p], m' \neq m$.

\item[$(iii)$] the last case is made of intersection of cases $(i)$ and $(ii)$, i.e. edges satisfying both constraints $\left\{u_j^{(m)}, v_j^{(m)}\right\}=\left\{u^{(m')}_{j'}, v^{(m')}_{j'}\right\}$ and $\left\{\sigma_{m}(u^{(m)}_{j}) , \sigma_{m}(v^{(m)}_{j})\right\}=\left\{\sigma_{m'}(u^{(m')}_{j'}) , \sigma_{m'}(v^{(m')}_{j'})\right\}$. This implies in particular that $\left\{u_j^{(m)}, v_j^{(m)}\right\}$ is an common fixed edge for $\sigma_{m}$ and $\sigma_{m'}$. By assumption \eqref{H3}, note that there cannot be a connected path of constraints of the form $(iii)$ of length greater or equal to $3$.
\end{itemize}

Let us now represent these constraints with a dependency graph. Each vertex a the graph represent one edge $\left\{u_j^{(m)}, v_j^{(m)}\right\}$ of the sum, that we will align column-wise according to $m \in [p]$. We put a plain (resp. dashed) edge between two nodes if they are enforced by constraint $(i)$ but not $(iii)$ (resp. $(ii)$ but not $(iii)$). Finally we draw a thick plain edge between two nodes if they are enforced by constraint $(iii)$.

In view of discussion in points $(i)-(ii)-(iii)$, this dependency graph must be $p$-partite. Moreover, the subgraph made of plain thick or plain edges (resp. plain thick of dashed edges) only consists in a union of disjoint paths. The thick plain subgraph is only made of isolated edges and paths fo size $3$. Finally, transitivity of the equality relationship enables to draw any path in any order: we shall take the left to right order by convention (no backtracking). 

We denote by $k_1$ (resp. $k_2$) the number of plain (resp. dashed) edges. We also denote track $k_3$ the number of thick plain isolated edges, and $k_4$ the number of thick plain isolated paths of length $2$. Figure \ref{fig:img_dependency} gives an example of such a dependency graph.

\begin{figure}[H]
	\centering
	\includegraphics[scale=0.8]{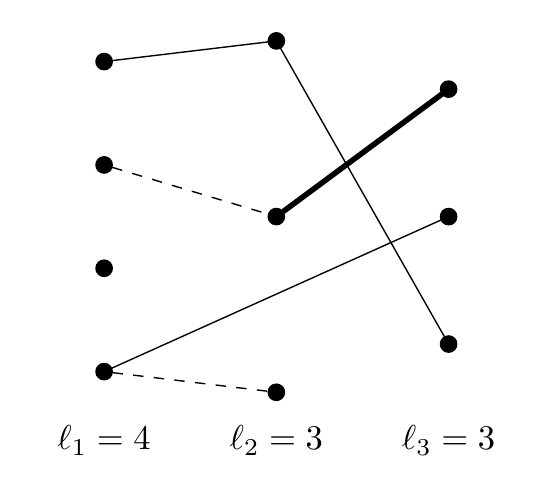}
	\caption{Example of a dependency graph, with $(k_1,k_2,k_3,k_4) = (3,2,1,0)$.}
	\label{fig:img_dependency}
\end{figure} 

In order to upper bound the contribution due to large terms, we must understand both the expectation of the product of indicators in \eqref{eq:expr_multi_mom} (this only depends on $(k_1,k_2,k_3,k_4)$), as well as the number of possible (labeled) dependency graphs with a given $(k_1,k_2,k_3,k_4)$.

First, all plain (resp. dashed) dependency edge makes $1$ (resp. $1$) indicators disappear in the expectation (for any event $\cA, \mathbf{1}_\cA^2 = \mathbf{1}_\cA$). In the same way, all thick plain isolated edge (resp. thick plain isolated path of length $2$) makes $2$ (resp. $4$) indicators disappear
the expectation for a given case with given $(k_1,k_2,k_3,k_4)$ is 
\begin{equation}\label{eq:contrib_cas_patho}
q^{2(\ell_1 + \ldots + \ell_p) - (k_1+k_2+2k_3+4k_4)} \leq C_1 n^{-2(\ell_1 + \ldots + \ell_p) + (k_1+k_2+2k_3+4k_4)}
\end{equation} where $C_1$ is a constant depending on $\ell_1, \ldots, \ell_p$,

Second, an upper bound for the number of possible (labeled) dependency graphs with a given $(k_1,k_2,k_3,k_4)$ can be established as follows. First, we have $k_1+k_2+k_3+2k_4$ equalities, leaving at most $\ell_1+\ldots+\ell_p - (k_1+k_2+k_3+2k_4)$ degrees of freedom in the choices of the edges. Moreover, we force $k_3$ of these edges to be common fixed edges between two (distinct) permutations, and $k_4$ of them to be common fixed edges between three (pairwise distinct) permutations. In view of hypotheses \eqref{H1} and \eqref{H2}, the number of possible (labeled) dependency graphs with a given $(k_1,k_2,k_3,k_4)$ is at most 

\begin{flalign}\label{eq:nombre_cas_patho}
\binom{k_1+k_2+k_3+k_4}{k_3+k_4}\card{\cS}^{\ell_1+\ldots+\ell_p - (k_1+k_2+k_3+2k_4) - k_3 -k_4} \times (n^{1+t})^{k_3} \times n^{t k_4} \nonumber \\ \leq C_2 n^{2(\ell_1+\ldots+\ell_p) - 2(k_1+k_2)-(3-t)k_3 -(6-t)k_4},
\end{flalign} where $C_2$ is a constant depending on $\ell_1, \ldots, \ell_p$.

Hence, in view of \eqref{eq:contrib_cas_patho} and \eqref{eq:nombre_cas_patho}, the total contribution of higher terms is upper bounded by

\begin{flalign*}
& \sum_{s=1}^{\ell_1+\ldots+\ell_p} \sum_{k_1+k_2+k_3+2k_4=s} C_1 C_2 n^{-2(\ell_1 + \ldots + \ell_p) + (k_1+k_2+2k_3+4k_4)} n^{2(\ell_1+\ldots+\ell_p) - 2(k_1+k_2)-(3-t)k_3 -(6-t)k_4}\\
&\leq C_1 C_2 \sum_{s=1}^{\ell_1+\ldots+\ell_p} \sum_{k_1+k_2+k_3+2k_4=s} n^{-k_1} n^{-k_2} n^{-(1-t)k_3} n^{-(2-t)k_4} \\
&\leq C_1 C_2 \times (\ell_1+\ldots+\ell_p)\times (\ell_1+\ldots+\ell_p)^{4 (\ell_1+\ldots+\ell_p)} \times n^{-(1-t)} \underset{n \to \infty}{\longrightarrow} 0.
\end{flalign*} This last convergence concludes the proof.
\end{proof}

\section{Proofs of Lemmas}
\subsection{Proof of Lemma \ref{lemme:controle_X}}\label{app:proof_control_X}
\begin{proof}
	For the control of $X_\bT$ we follow classical computations made in \cite{Bollobas2001} to establish asymptotic behavior of $X_\bT$. For our purpose, we only need the two first moments. Assume that $\bT$ is of size $k = k(\bT) \leq K$, and that its automorphism group has $a =a(\bT)$ elements. Then, letting $\mu = \lambda s$,
	\begin{flalign*}
	\dE\left[X_\bT\right] &= \binom{n}{k} \times \frac{k!}{a} \times \left(\frac{\mu}{n}\right)^{k-1} \left(1-\frac{\mu}{n}\right)^{k(n-k)+\binom{k}{2}-k+1}.
	\end{flalign*} Indeed, we have $\binom{n}{k}$ choices for the nodes, then $\frac{k!}{a}$ ways of putting the edges. 
	Using $ \binom{n}{k} \sim \frac{n^k}{k!}$ and $\left(1-\frac{\mu}{n}\right)^{-k^2+\binom{k}{2}-k+1} \sim 1$ as soon as $k = o(\sqrt{n})$, we get
	\begin{flalign*}
	\dE\left[X_\bT\right] &\sim n \mu^{k-1}e^{-\mu k}/a.
	\end{flalign*} We now compute $\dE\left[X_\bT (X_\bT-1)\right] $ by classically counting the number of ordered pairs of distinct isolated tree components of $\cG \land \cG'$ isomorphic to $\bT$. This number is then multiplied by the probability of observing these two distinct isolated components. This gives
	\begin{flalign*}
	\dE\left[X_\bT (X_\bT-1)\right] &= \binom{n}{k} \binom{n-k}{k} \times \left(\frac{k!}{a}\right)^2 \times \left(\frac{\mu}{n}\right)^{2(k-1)} \left(1-\frac{\mu}{n}\right)^{2\left(k(n-2k) + \binom{k}{2}-k+1\right)} \left(1-\frac{\mu}{n}\right)^{k^2}.
	\end{flalign*}
	Here again, $k=o(\sqrt{n})$ gives that 
	\begin{flalign*}
	\dE\left[X_\bT (X_\bT-1)\right] &\sim n^2 \mu^{2(k-1)} e^{-2\mu k}/a^2.
	\end{flalign*}Denoting $\alpha = \alpha(\bT) := n \mu^{k-1}e^{-\mu k}/a(\bT)$, these computations give that $\dE\left[X_\bT\right] \sim \Var\left(X_\bT\right) \sim \alpha(\bT)$ when $n \to \infty$, uniformly in $k \leq K(n)$ as soon as $K(n) = o(\sqrt{n})$. Let us fix $\eps = \eps(n)>0$ small enough. Applying Chebyshev's inequality together with the union bound gives
	\begin{flalign*}
	\dP\left(\exists (k,\bT) \in [K(n)] \times \dT, X_{\bT} \leq (1-\eps)\alpha(\bT)\right) & \leq \sum_{k=1}^{K(n)} \sum_{\bT \in \dT_k} \dP\left(X_\bT - \dE\left[X_\bT\right] \leq (1- \eps) \alpha(\bT) - \dE\left[X_\bT\right] \right)\\
	& \overset{(a)}{\leq} \sum_{k=1}^{K(n)} \sum_{\bT \in \dT_k} \frac{\Var\left(X_\bT\right)}{\left((1- \eps) \alpha(\bT) - \dE\left[X_\bT\right]\right)^2} \\
	& \overset{(b)}{\leq} (1+o(1)) \sum_{k=1}^{K(n)} \sum_{\bT \in \dT_k} \frac{1}{\eps^2 \alpha(\bT)} \\
	&  \overset{(c)}{\leq} (1+o(1)) \sum_{k=1}^{K(n)} \sum_{\bT \in \dT_k} \frac{1}{\eps^2 n f(k)} \\
	& \overset{(d)}{\leq}(1+o(1)) K(n)^{K(n)} \frac{1}{\eps^2 n f(K(n))}, && \\
	\end{flalign*} where 
	\begin{equation}\label{eq:def_fk}
	f(k) := \frac{\mu ^{k-1}e^{-\mu k}}{k!}.
	\end{equation}
	We used in $(a)$ that all $ (1- \eps) \alpha(\bT) - \dE\left[X_\bT\right]$ are negative for $n$ large enough, in $(b)$ uniformity in $k \leq K(n)$, in $(c)$ the lower bound $nf(k)$ for $\alpha(T)$, and finally in $(d)$ that $k \mapsto f(k)$ is decreasing since $\mu e^{-\mu}<1$.
	
	Taking now e.g. $\eps = n^{-1/4}$, the last fact to check to establish the Lemma is that $K^{K}/f(K) = o(n^{1/2})$ when $K=K(n) = \log^{1/2}(n)$:
	\begin{flalign*}
	K^{K}/f(K) &= K^K K! (1/\mu)^{K-1} e^{\mu K}\\
	& \leq \exp\left(2 K \log K + (\log(1/\mu) + \mu) K\right)\\
	&=  \exp\left(\log^{1/2}(n) \log \log n + (\log(1/\mu) + \mu) \log^{1/2}(n)\right) = o(n^{1/2}).
	\end{flalign*} 

\end{proof}

\subsection{Proof of Lemma \ref{lemma:controle_overlap_ij}}\label{app:proof_overlap}
\begin{proof}
Denote $T_\infty := \card{V_\infty}$ and $T_> := \card{V_>}$. First notice that for any permutations $\sigma_i, \sigma_j$ with $i \neq j$ generated with Algorithm \ref{algo:rec_construction}, we have the following equality:
\begin{equation}\label{eq:decompo_overlap}
    \Fix(\sigma_i, \sigma_j) = T_\infty + T_> + \sum_{k=1}^{K(n)} \sum_{\bT \in \dT_k} k \cdot \Fix(\Sigma^{(i)}_\bT, \Sigma^{(j)}_\bT),
\end{equation} where $\Sigma^{(i)}_\bT$ (resp. $\Sigma^{(j)}_\bT$) is the tree permutation associated with $\bT$ in $\sigma_i$ (resp. in $\sigma_j$). We know that $T_\infty = c(\lambda s) n +o(n)$ w.h.p. and by Lemma \ref{lemma:small_trees}, $T_> =o(n)$ w.h.p.

Define
\begin{equation}
    \Fix'(\sigma_i, \sigma_j) := \sum_{k=1}^{K(n)} \sum_{\bT \in \dT_k} k \cdot \Fix(\Sigma^{(i)}_\bT, \Sigma^{(j)}_\bT),
\end{equation} the second term in \eqref{eq:decompo_overlap}. We dominate $\Fix'(\sigma_i, \sigma_j)$ as follows: \begin{lemme}\label{lemma:laplace_overlap}
	If $X=\Fix(\Sigma^{(i)}_\bT, \Sigma^{(j)}_\bT),$, then for all $t \in \dR$,
	\begin{equation}\label{eq:lemma:laplace_overlap}
	\dE\left[e^{tX}\right] \leq \exp(e^t).
	\end{equation}
\end{lemme}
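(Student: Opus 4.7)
The plan is to identify the distribution of $X$ as a classical object. By construction in Algorithm \ref{algo:rec_construction}, the tree permutations $\Sigma^{(i)}_\bT$ and $\Sigma^{(j)}_\bT$ are drawn independently and uniformly at random on $\cS_{X_\bT}$. Hence, setting $N := X_\bT$, the random permutation $\tau := (\Sigma^{(j)}_\bT)^{-1} \circ \Sigma^{(i)}_\bT$ is uniform on $\cS_N$, and $X = \Fix(\Sigma^{(i)}_\bT, \Sigma^{(j)}_\bT)$ is exactly its number of fixed points. So the lemma reduces to a moment generating function bound for the number of fixed points of a uniform random permutation.

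First I would compute the factorial moments by a classical counting argument: for $1 \le k \le N$,
\begin{equation*}
\dE\left[X^{\underline{k}}\right] = \sum_{\substack{i_1,\dots,i_k \in [N] \\ \text{distinct}}} \dP(\tau(i_1)=i_1,\dots,\tau(i_k)=i_k) = \binom{N}{k} k! \cdot \frac{(N-k)!}{N!} = 1,
\end{equation*}
and $\dE[X^{\underline{k}}] = 0$ for $k > N$. In particular $\dE[X^{\underline{k}}] \le 1$ for every $k \ge 0$.

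Next I would use the polynomial identity valid for any non-negative integer $X$,
\begin{equation*}
e^{tX} = (1+(e^t-1))^X = \sum_{k=0}^{\infty} \binom{X}{k}(e^t-1)^k = \sum_{k=0}^{\infty} \frac{X^{\underline{k}}}{k!}(e^t-1)^k,
\end{equation*}
and take expectations termwise. For $t \ge 0$, $(e^t-1)^k \ge 0$, so the bound $\dE[X^{\underline{k}}] \le 1$ gives
\begin{equation*}
\dE\left[e^{tX}\right] \le \sum_{k=0}^{\infty} \frac{(e^t-1)^k}{k!} = e^{e^t-1} \le e^{e^t}.
\end{equation*}
For $t \le 0$ the bound is immediate, since $X \ge 0$ forces $e^{tX} \le 1 \le e^{e^t}$.

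There is no real obstacle here: the only subtlety is handling the sign of $e^t-1$, which is why the case $t<0$ is treated separately by a trivial bound, while the interesting case $t \ge 0$ uses the non-negativity of the expansion to apply the factorial moment estimate. The bound obtained is in fact the moment generating function of a $\Poi(1)$ variable (up to the factor $e$), which matches the well-known Poisson limit for the number of fixed points of a uniform random permutation.
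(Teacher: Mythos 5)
Your proof is correct and rests on the same key fact as the paper's: the factorial moments of the number of fixed points of a uniform random permutation satisfy $\dE\left[\binom{X}{m}\right]=1/m!$ (up to truncation), summed into a Poisson-type generating function. The only difference is mechanical --- you expand $e^{tX}=\sum_k \binom{X}{k}(e^t-1)^k$ exactly (carefully splitting on the sign of $e^t-1$), whereas the paper bounds $\dE[e^{tX}]$ via $\sum_m e^{tm}\dP(X\geq m)\leq \sum_m e^{tm}\dE\left[\binom{X}{m}\right]$; your route is in fact slightly tidier and yields the sharper constant $e^{e^t-1}$.
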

\begin{proof}
	\begin{flalign*}
	\dE\left[e^{tX}\right] & = \sum_{m \geq 0} e^{tm} \dP(X \geq m).
	\end{flalign*} Noting that $\dP(X \geq m) \leq \dE\left[\binom{X}{m}\right]$ and that
	\begin{flalign*}
	\dE\left[\binom{X}{m}\right] &= \frac{1}{m!} \dE\left[X(X-1)\ldots (X-m+1)\right]\\
	& = \frac{1}{m!} k(k-1)\ldots (k-m+1) \frac{(k-m)!}{k!} = \frac{1}{m!}
	\end{flalign*} gives
	\begin{flalign*}
	\dE\left[e^{tX}\right] & \leq \sum_{m \geq 0} \frac{e^{tm}}{m!}\leq \exp(e^t).
	\end{flalign*} 
\end{proof}

Using independence of the $X$ variables, Equation \eqref{eq:lemma:laplace_overlap} of Lemma \ref{lemma:laplace_overlap} give that for all $t \in \dR$,
\begin{flalign}\label{eq:laplace_overlap_total}
\dE\left[e^{t \cdot \Fix'(\sigma_{i},\sigma_{j})}\right]& \leq \prod_{k=1}^{K(n)} \prod_{\bT \in \dT_k} \exp(e^{tk})  \leq \exp\left(e^{t K(n) } K(n)^{K(n)+1}\right).
\end{flalign} Now, we use the classical Chernoff bound, for positive $t$,
\begin{flalign*}
\dP\left(\Fix'(\sigma_i,\sigma_j) \geq n^{\alpha}\right) & \leq \exp\left(- tn^{\alpha} + e^{t K(n)} K(n)^{K(n)+1}\right) \\ 
& \leq \exp\left(- \frac{n^{\alpha}}{K(n)} \left[\log \left(\frac{n^{1-\alpha}}{K(n)^{K(n)+2}}\right) - 1\right]\right),
\end{flalign*} taking $t = \frac{1}{K(n)} \log \left(\frac{n^{\alpha}}{K(n)^{K(n)+2}}\right)$.
The right hand side tend to $0$ for any $\alpha \in (0,1)$, and a simple use of the union bound ends the proof. 
\end{proof}

\subsection{Proof of Lemma \ref{lemma:controle_F}}\label{app:proof_control_F}
\begin{proof}
Fix $t>0$. We use a standard first moment method. We will use the results of Lemmas \ref{lemma:small_trees} and \ref{lemme:controle_X}, conditioning on the event $\cA$ where the corresponding results hold. Since $\dP(\cA) = 1-o(1)$, this conditioning is legitimate for our purpose. 

\paragraph{Step 1.} Let us first control the term $F(\cS_{out}, \sigma_{i_1},\ldots,\sigma_{i_r})$: edges of $\cS_{out}$ are made of exactly one vertex in $V_{\infty,>}$. There are at most $n^2$ such edges, and the probability for a given edge of $\cS_{out}$ being a common fixed edge of $\sigma_{i_1},\ldots,\sigma_{i_r}$ is $\frac{1}{X_\bT^{r-1}}$, which can be upper-bounded on $\cA$ by $(nf(K(n)))^{1-r} \leq n^{1-r+t/2}$ by Remark \ref{remarque:f(K)}. 

Edges of $\cS_{out}$ thus have a contribution in $\dE\left[F(\sigma_{i_1},\ldots,\sigma_{i_r}) | \cA \right]$ of at most $n^{3-r+t/2}$.

\paragraph{Step 2.} In the edges appearing in $F(\sigma_{i_1},\ldots,\sigma_{i_r})$, we consider three cases:
\begin{itemize}
\item[$(i)$] edges of $\intra$: these are edges made with two vertices in the same tree $\cT \sim \bT \in \dT$. On event $\cA$, there are at most $$\sum_{k=1}^{K(n)} \sum_{\bT \in \dT_k} X_\bT k^2 \leq n K(n)$$ such edges. The probability for a given edge of $\intra$ made of vertices of  $\bT \in \dT$ being a common fixed edge of $\sigma_{i_1},\ldots,\sigma_{i_r}$ is $\frac{1}{X_\bT^{r-1}}$, which can be upper-bounded by $(nf(K(n)))^{1-r} \leq n^{1-r+t/2}$. Edges of $\intra$ thus have a contribution in $\dE\left[F(\sigma_{i_1},\ldots,\sigma_{i_r}) | \cA \right]$  of at most $n^{2-r+t/2}$.

\item[$(ii)$] edges of $\interu$: these are edges made with two vertices $u,v$ in different trees $\cT \neq \cT'$ (but that may be $\sim$ to the same $\bT \in \dT$), and verifying $u \notsimt v$. There are at most $n^2$
such edges. Since $u \notsimt v$, there are only one possibility to map two edges of $\interu$. The probability for a given edge of $\interu$ made of vertices of  $\cT \sim \bT, \cT' \in \bT'$ being a common fixed edge is $\frac{1}{(X_\bT(X_\bT -1))^{r-1}}$, and edges of $\interu$ thus have a contribution in the expectation of at most $n^{4-2r+t/2}$.

\item[$(iii)$] edges of $\interd$: these are edges similar to case $(ii)$, except that their endpoints belong necessarily to isomorphic trees, and verifying $u \simt v$. There are at most $n^2$ 
such edges. Since $u \simt v$, there are two ways to map two edges of $\interd$. The probability for a given edge of $\interd$ made of vertices of  $\cT, \cT' \sim \bT$ being a common fixed edge is time $\left(\frac{2}{X_\bT(X_\bT -1)}\right)^{r-1}$, and edges of $\interd$ thus have a contribution in the expectation of at most $n^{4-2r+t/2}$.
\end{itemize}

\paragraph{Step 3.} The first two steps show that $\dE\left[F(\sigma_{i_1},\ldots,\sigma_{i_r}) | \cA \right] \leq C n^{3-r+t/2}$ for all $t>0$. Summing over all possible $r$-tuples of permutations, Markov inequality yields
\begin{flalign*}
\dP\left( \exists r \geq 4, \, \exists \sigma_{i_1}, \ldots, \sigma_{i_r} \mbox{ pairwise distinct}, \, F(\cS,\sigma_{i_1},\ldots,\sigma_{i_r}) \geq 1 \right) & \leq o(1) + \sum_{r=4}^{\infty} p^r C n^{3-r+t/2}\\
& \leq C p^4 n^{t/2-1} \to 0,
\end{flalign*} for $t$ small enough, and 
\begin{flalign*}
\dP\left( \exists \sigma_{i_1}, \sigma_{i_2}, \sigma_{i_3} \mbox{ pairwise distinct}, \, F(\cS,\sigma_{i_1}, \sigma_{i_2}, \sigma_{i_3}) \geq n^t \right) & \leq o(1) + p^3 \times C n^{-t/2} \to 0, 
\end{flalign*}and
\begin{flalign*}
\dP\left( \exists \sigma_{i_1} \neq \sigma_{i_2}, F(\cS,\sigma_{i_1}, \sigma_{i_2}) \geq n^{1+t} \right) & \leq o(1) + p^2 \times C n^{-t/2} \to 0.
\end{flalign*}
\end{proof}

\end{document}